\documentclass{article} 

\PassOptionsToPackage{table,RGB,dvipsnames}{xcolor}

\usepackage{iclr2026_conference,times}

\usepackage{amsmath,amsfonts,bm}

\def\eqref#1{equation~\ref{#1}}

\def\1{\bm{1}}

\DeclareMathAlphabet{\mathsfit}{\encodingdefault}{\sfdefault}{m}{sl}
\SetMathAlphabet{\mathsfit}{bold}{\encodingdefault}{\sfdefault}{bx}{n}

\def\gA{{\mathcal{A}}}
\def\gB{{\mathcal{B}}}

\def\gD{{\mathcal{D}}}

\def\gL{{\mathcal{L}}}
\def\gM{{\mathcal{M}}}
\def\gN{{\mathcal{N}}}

\def\gS{{\mathcal{S}}}

\def\gX{{\mathcal{X}}}

\def\sR{{\mathbb{R}}}

\usepackage{hyperref}
\usepackage{url}
\usepackage{enumitem}
\usepackage{xcolor-material}
\usepackage{cleveref}
\usepackage{algorithm}
\usepackage{graphicx}
\usepackage{amsthm}
\usepackage[noend]{algpseudocode}
\usepackage{subcaption}
\usepackage[T1]{fontenc}
\usepackage{booktabs}
\usepackage[export]{adjustbox}
\usepackage{soul}

\usepackage{mathtools}

\definecolor{plgray}{HTML}{999999}
\definecolor{bcgray}{HTML}{AEAEAE}
\definecolor{reformblue}{HTML}{335E96}
\definecolor{supportred}{HTML}{C94D2E}
\DeclareMathOperator{\supp}{supp}

\newcommand{\unif}{\mathcal{U}}
\newcommand{\pl}[1]{{\color{plgray} #1}}

\newcommand{\algo}{\texttt{ReFORM}}
\newcommand{\ourname}[0]{\textcolor{MaterialBlue900}{{\algo{}}}}
\newcommand{\baselinename}[1]{\textcolor{MaterialPurple900}{\texttt{#1}}}
\newcommand{\datasetname}[1]{{\textsc{#1}}}
\newcommand{\envname}[1]{{\fontfamily{lmtt}\selectfont#1}}
\renewcommand*{\eqref}[1]{(\ref{#1})}
\newtheorem{proposition}{Proposition}
\newtheorem{remark}{Remark}
\newtheorem{theorem}{Theorem}

\newcommand\extrafootertext[1]{\bgroup
    \renewcommand\thefootnote{\fnsymbol{footnote}}\renewcommand\thempfootnote{\fnsymbol{mpfootnote}}\footnotetext[0]{#1}\egroup
}

\Crefname{equation}{Eq.}{Eq.}

\definecolor{newColor}{HTML}{D80000}

\newcounter{fnmarkcntr}

\title{ReFORM: Reflected Flows for On-support \\ Offline RL via Noise Manipulation}

\iclrfinalcopy

\author{
\begin{tabular}{@{}l@{}}Songyuan Zhang$^\dagger$ \hskip1em Oswin So$^\dagger$ \hskip1em H. M. Sabbir Ahmad$^\ddagger$ \hskip1em Eric Yang Yu$^\dagger$\\
Matthew Cleaveland$^*$ \hskip1em Mitchell Black$^*$ \hskip1em Chuchu Fan$^\dagger$
\end{tabular}
\\
$^\dagger$MIT \hskip1em
$^\ddagger$Boston University \hskip1em
$^*$MIT Lincoln Laboratory
\\
$^\dagger$\texttt{\{szhang21,oswinso,eyyu,chuchu\}@mit.edu} \hskip1em 
$^\ddagger$\texttt{sabbir92@bu.edu}\\ 
$^*$\texttt{\{matthew.cleaveland,mitchell.black\}@ll.mit.edu}
}

\usepackage{caption}

\begin{document}

\maketitle

\begin{abstract}

Offline reinforcement learning (RL) aims to learn the optimal policy from a fixed dataset generated by behavior policies without additional environment interactions.
One common challenge that arises in this setting is the out-of-distribution (OOD) error, which occurs when the policy leaves the training distribution.
Prior methods penalize a statistical distance term to keep the policy close to the behavior policy, but this constrains policy improvement and may not completely prevent OOD actions.
Another challenge is that the optimal policy distribution can be multimodal and difficult to represent.
Recent works apply diffusion or flow policies to address this problem, but it is unclear how to avoid OOD errors while retaining policy expressiveness.
We propose \algo{}, an offline RL method based on flow policies that enforces the less restrictive \emph{support constraint} by construction.
\algo{} learns a behavior cloning (BC) flow policy with a bounded source distribution to capture the support of the action distribution, then optimizes a reflected flow that generates bounded noise for the BC flow while keeping the support, to maximize the performance.
Across $40$ challenging tasks from the OGBench benchmark with datasets of varying quality and using a \textit{constant} set of hyperparameters for all tasks, \algo{} dominates all baselines with \textit{hand-tuned} hyperparameters on the performance profile curves.
\footnote{Project website: \url{https://mit-realm.github.io/reform/}}
\extrafootertext{DISTRIBUTION STATEMENT A. Approved for public release. Distribution is unlimited.}
\end{abstract}

\section{Introduction}

Offline reinforcement learning (RL) trains an optimal policy from a previously collected dataset without interacting with the environment~\citep{levine2020offline}.
This technique is especially useful in domains where large datasets are already available and environment interactions are expensive and potentially unsafe \citep{fu2020d4rl}.
However, there are two major challenges.
First, the lack of online exploration makes the distribution shift especially dangerous. 
That is, for out-of-distribution (OOD) actions not represented in the dataset, the learned $Q$-function can produce overly optimistic estimates that lead the policy astray~\citep{levine2020offline}.
Second, traditional policy classes are typically represented using a unimodal distribution such as a Gaussian \citep{kumar2020conservative,tarasov2023revisiting}, whereas more complex offline datasets and tasks can require multimodal action distributions.

Prior works attempt to address the OOD issue by keeping the learned policy close to the behavior policy
by regularizing a statistical distance to the behavior policy
~\citep{wang2018exponentially, peng2019advantage, mao2023str, kumar2019stabilizing, wu2019behavior}.
However, selecting a distance measurement along with an appropriate regularization weight can be difficult depending on the task and dataset.
Perhaps the most common type of statistical distance used is the Kullback–Leibler (KL) divergence~\citep{wang2018exponentially, peng2019advantage, wu2019behavior, jaques2019way, siegel2020keep, nair2020awac, wang2020critic, kostrikov2022offline, fql_park2025}, which can avoid the OOD issue but can also be too restrictive and produce an overly conservative policy.
For example, if the dataset has low density on the optimal behavior, the KL divergence regularization will encourage the learned policy to be suboptimal.
Similar works~\citep{wu2019behavior, kumar2019stabilizing} have considered alternative statistical distances such as the Wasserstein and MMD distances that have been shown to improve performance on certain tasks.
However, these methods do not completely prevent OOD actions, and the need to choose a regularization weight remains a problem.

To tackle the challenge of multimodal action distributions,
recent works have proposed using diffusion policies \citep{hansen2023idql} and flow policies \citep{fql_park2025} to model complex action distributions in the dataset.
However, it remains unclear how to address the OOD issue with these highly expressive function classes without hurting their expressivity. 

In this work, we propose \textbf{Re}flected \textbf{F}lows for \textbf{O}n-support offline \textbf{R}L via noise \textbf{M}anipulation (\algo{}), an offline RL method that aims to address both above issues by constraining a flow policy using the less restrictive \emph{support constraint}.
Rather than regularizing the learned policy via a statistical distance, we only require the actions produced to stay within the support of the action distribution of the behavior policy.
\algo{} learns a behavior cloning (BC) flow policy from the dataset, and additionally learns a reflected flow \citep{xie2024rfm} noise generator that manipulates the source distribution of the BC policy within its support. 
This approach enables us to \emph{realize the support constraint by construction} without regularization, therefore avoiding the need to specify any regularization weights.
In other words, our method bypasses the hyperparameter sensitivity issue by having \emph{constant hyperparameters}.
To summarize our contributions:

\begin{itemize}[leftmargin=2em,topsep=0pt,partopsep=0pt,parsep=0pt]
    \item We propose \algo{}, a two-stage flow policy that realizes the support constraint by construction and avoids the OOD issue without constraining the policy improvement. 
    \item We propose applying reflected flow to generate constrained multimodal noise for the BC flow policy, thereby mitigating OOD errors while maintaining the multimodal policy. 
    \item Extensive experiments on $40$ challenging tasks with datasets of different qualities demonstrate that, with a \textit{constant set of hyperparameters}, \algo{} dominates all baselines using flow policy structures with the \textit{best hand-tuned hyperparameters} on the performance profile curve.
\end{itemize}

\section{Related Work}

\textbf{Distributional shift mitigation in offline RL. } 
A fundamental challenge of dynamic programming methods in offline RL is the OOD problem, where the learned policy tries to exploit erroneous $Q$-values from extrapolation error \citep{levine2020offline} and generates OOD actions.
Consequently, many offline RL methods have proposed to constrain or penalize the statistical distance between the learned policy and the behavior policy, either with an additional loss term or by regressing to the estimated optimal policy, to mitigate this distribution shift issue.
Examples include using the maximum mean discrepancy (MMD) distance \citep{kumar2019stabilizing},
Wasserstein distance \citep{wu2019behavior} and KL divergence \citep{wang2018exponentially,peng2019advantage,wu2019behavior,jaques2019way,siegel2020keep,nair2020awac,wang2020critic,kostrikov2022offline,fql_park2025}.
One key challenge with these methods is that the amount of regularization is a hyperparameter that needs to be tuned for each task and dataset \citep{ogbench_park2025,fql_park2025} and can significantly affect the method's performance. Moreover, as argued by \citet{kumar2019stabilizing}, constraining the divergence can be too restrictive in cases where optimal actions happen with very low probability under the behavior policy.
Another family of methods uses the \textit{support} of the behavior policy, either by regularizing the policy {\citep{kumar2019stabilizing,wu2022spot,mao2023str,zhang2023constrained}}, or via regularizing the $Q$-function outside the support \citep{kumar2020conservative,lyu2022mildly,mao2023svr,cen2024learning}.
Our work falls in the category of enforcing support constraints on the learned policy.
However, instead of approximating the support constraint by a suitably designed regularization term, our method enforces the support constraint \textit{by construction} by optimizing in the BC policy's (bounded) latent space.

\textbf{Fine-tuning flow-based models for offline RL. }
BC methods using diffusion models \citep{sohl2015deep,ho2020denoising,song2020score} or flow matching \citep{lipman2023flow,liu2022flow,albergo2022building} have seen increasing use in the control and robotics communities \citep{chi2023diffusion,reuss2023goal,pearce2023imitating,wang2023diffusion}.
However, since BC aims to mimic the dataset, its performance is tied to the performance of the behavior policy.
To fix this, one can consider fine-tuning the learned flow-based model to maximize a user-supplied reward function.
Following the success of fine-tuning flow-based models for image generation \citep{uehara2024fine,black2024training,domingo2024adjoint}, fine-tuning has also been applied to the offline RL setting \citep{hansen2023idql,chen2024score,fql_park2025,ding2024consistency,Zhang2025EnergyWeightedFlowMatching}.
However, almost all fine-tuning methods for offline RL tackle the problem of distribution shift with an additional loss term penalizing statistical distance from the behavior policy, with the weight of this term being a sensitive hyperparameter that needs to be tuned for each task and dataset \citep{fql_park2025}.

\textbf{Latent space optimization in generative modeling. }
Instead of fine-tuning the flow model directly, another line of work considers optimizing the distribution in the latent space, i.e., initial noise, of the generative model.
In the context of image generation, methods that optimize the initial noise using either regression \citep{li2025noisear,guo2024initno,zhou2024golden,ahn2024noise,eyring2025noise} or RL \citep{miao2025minimalist} have found success in improving the quality of generated images.
In RL, \citet{singh2021parrot} explored using normalizing flows \citep{dinh2016density} to improve exploration in \textit{online} RL.
Since offline RL was not the focus of their work, they do not restrict the output of their learned latent space policy. Consequently, the policy can output unbounded and potentially OOD samples in the latent space, which is harmful in the offline RL setting.
Recently, \citet{zhou2021plas} and \citet{wagenmaker2025steering} have applied this idea to offline RL for Conditional Variational Autoencoders \citep{sohn2015learning} and diffusion policies, respectively, but they additionally restrict the latent space policy to a fixed action magnitude.
Here, the action magnitude roughly controls how likely the latent action is under the behavior policy, playing a similar role to the statistical distance regularization coefficient in existing offline RL works.
As we will show in \Cref{sec: exp}, the final performance is quite sensitive to this hyperparameter, which varies on different tasks and different datasets.
In contrast, our proposed method does not have any such hyperparameters that play a similar role, removing the need for adapting them each time the environment or dataset changes. 

\section{Preliminaries}\label{sec: preliminaries}

\paragraph{Offline RL.}
Let $\Delta(\gX)$ be the set of probability distributions over space $\gX$, and denote placeholder variables with \pl{gray}. A Markov Decision Process (MDP) is defined by a tuple $\gM = (\gS, \gA, r, \rho_0, P, \gamma)$, where $\gS$ is the state space, $\gA\subseteq\sR^d$ is the $d$-dimensional action space, $r(\pl{s},\pl{a}): \gS\times\gA\to\sR$ is the reward function, $\rho_0\in\Delta(\gS)$ is the initial state distribution, $P(\pl{s'}|\pl{s},\pl{a}): \gS\times\gA\to\Delta(\gS)$ is the transition dynamics, and $\gamma\in[0, 1]$ is the discount factor. Given a dataset of $N$ trajectories $\gD=\{\tau_1, \tau_2, \dots, \tau_N\}$ generated by some \textit{behavior} policy $\pi_\beta(\pl{a}|\pl{s}):\gS\to\Delta(\gA)$, where $\tau_i = (s_0, a_0, s_1, a_1, \dots, s_{H_i}, a_{H_i})$, the goal of offline RL is to find a policy $\pi_\theta(\pl{a}|\pl{s}):\gS\to\Delta(\gA)$ parameterized by $\theta$ that maximizes the expected discounted return $R(\pi_\theta) = \mathbb E_{\tau\sim \rho^{\pi_\theta}(\pl{\tau})}[\sum_{h=0}^H\gamma^h r(s_h, a_h)]$, where $\rho^{\pi_\theta}(\tau) = \rho_0(s_0)\pi_\theta(a_0|s_0)P(s_1|s_0, a_0)\cdots\pi_\theta(a_H|s_H)$. Note that in the offline RL setting, sampling in the environment with policy $\pi_\theta$ is not allowed. 

OOD actions are a key challenge in offline RL \citep{levine2020offline}.
Many actor-critic methods learn the policy-conditioned state-action value function (i.e., $Q$-function) $Q(\pl{s},\pl{a}): \gS\times\gA\to\sR$. For a policy $\pi_\theta$, this is defined as
\begin{equation}
    Q^{\pi_\theta}(s, a) = \mathbb E\left[\sum_{h=0}^H \gamma^h r(s_h, a_h)\mid s_0=s,a_0=a,a_h\sim\pi_\theta(s_h), \forall h\geq1\right],
\end{equation}
corresponding to the expected discounted return obtained by applying action $a$ from state $s$ then following policy $\pi_\theta$.
For parameters $\phi$, $Q_\phi$ is commonly learned with fitted $Q$ evaluation using the TD error
\begin{equation}
    \gL(\phi) = \mathbb{E}_{(s, a, s')\sim\gD, a'\sim \pi_\theta(s')}\left[\left(r(s,a)+\gamma Q^{\pi_\theta}_{\hat\phi}(s',a') - Q^{\pi_\theta}_\phi(s, a)\right)^2\right],
\end{equation}
where $Q^{\pi_\theta}_{\hat\phi}$ is a target network (e.g., with soft parameters updated by polyak averaging \citep{polyak1992acceleration}). However, if the policy $\pi_\theta$ samples OOD actions $a'$, the target $Q^{\pi_\theta}_{\hat\phi}$ can produce an erroneous OOD value and cause the learned policy to incorrectly optimize for the OOD value \citep{levine2020offline}. 
To address this issue, many offline RL methods
regularize the statistical distance between
the learned policy and the behavior policy (e.g., with the KL divergence \citep{peng2019advantage,fujimoto2021a,hansenestruch2023idql} or Wasserstein distance \citep{wu2019behavior,fql_park2025}),
resulting in the following objective for policy improvement:
\begin{equation}\label{eq: density constraint}
    \gL(\theta) = \mathbb{E}_{s\sim\gD, a\sim\pi_\theta(s)}\left[-Q^{\pi_\theta}_\phi(s, a) + \alpha D(\pi_\theta\mid\mid\pi_\beta)\right],
\end{equation}
where $D(\cdot \| \cdot)$ is some statistical distance, e.g., $D_\mathrm{KL}$ for KL divergence or $D_\mathrm{W2}$ for the Wasserstein distance.
However, this regularized objective introduces an additional hyperparameter $\alpha$ that needs to be \textit{hand-tuned} for each experiment \citep{ogbench_park2025,fql_park2025}.

\paragraph{Flow matching.}

Flow matching \citep{lipman2023flow,liu2022flow,albergo2022building} has recently become an increasingly popular way of training flow-based generative models. Given a target distribution $p(\pl{x})\in\Delta(\sR^d)$, flow matching learns a time-dependent velocity field $v(\pl{t}, \pl{x})$ that transforms a simple source distribution $q(\pl{x})$ (e.g. standard Gaussian $\gN(0, I^d)$) at $t=0$ to the target distribution $p(\pl{x})$ at $t=1$. The resulting flow $\psi(\pl{t}, \pl{x}) : [0, 1] \times \sR^d \to \sR^d$, mapping samples from the source $x \sim q$ to the target $\psi(1, x) \sim p$,
is then the solution to the ordinary differential equation (ODE)
\begin{equation}\label{eq: flow}
    \frac{d}{dt}\psi(t, x) = v(\psi(t, x)), \quad \psi(0, x) = x.
\end{equation}
Flow matching is a simple yet powerful technique alternative to denoising diffusion \citep{ho2020denoising}, capable of generating complex multimodal target distributions.

\section{Method}\label{sec: method}

To solve the problem of OOD actions, at any given state $s$,
the chosen action $a$ should be constrained to lie within the \textit{support} $\supp(\pi_\beta(\cdot|s)) \coloneqq \{ a \mid \pi_\beta(a|s)>0 \}$ of the behavior policy $\pi_\beta$.
However, constraining common statistical distances, such as the KL divergence or the Wasserstein distance, theoretically leads to problems from the perspective of support constraints \footnote{by interpreting the constant as a Lagrange multiplier, regularization with a fixed coefficient as in \eqref{eq: density constraint} can be interpreted as equivalently enforcing a constraint \citep{levine2020offline}}.
We provide the following theoretical results with all proofs provided in \Cref{app: proof}.

First, constraining the KL divergence is a sufficient but not necessary condition to enforce support constraints \citep{kumar2019stabilizing,mao2023str}. Formally, we have the following result:

\begin{proposition}\label{thm: KL implies support}
    Given a state $s\in\gS$, for any $\epsilon$ such that $0\leq\epsilon<\infty$, $D_\mathrm{KL}(\pi_\theta(\cdot|s)\mid\mid\pi_\beta(\cdot|s))\leq\epsilon$ implies $\supp(\pi_\theta(\cdot|s))\subseteq\supp(\pi_\beta(\cdot|s))$.
    On the other hand, for any $M > 0$, there exist distributions $\pi_\theta$ and $\pi_\beta$ such that $\supp(\pi_\theta(\cdot|s))\subseteq\supp(\pi_\beta(\cdot|s))$ but $D_\mathrm{KL}(\pi_\theta(\cdot|s)\mid\mid\pi_\beta(\cdot|s))> M$.
\end{proposition}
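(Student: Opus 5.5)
The proof has two parts: a contrapositive argument for the implication, and an explicit construction for the converse. Throughout, I work with densities $p=\pi_\theta(\cdot|s)$ and $q=\pi_\beta(\cdot|s)$ with respect to a common reference measure (Lebesgue on $\gA\subseteq\sR^d$). The support is read up to null sets, i.e.\ as $\{a: p(a)>0\}$, matching the paper's definition. The KL divergence uses the standard conventions: $D_\mathrm{KL}(p\|q)=+\infty$ unless $P\ll Q$, and otherwise $D_\mathrm{KL}(p\|q)=\int p\log(p/q)$.

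\textbf{First claim.} Suppose $\supp(p)\not\subseteq\supp(q)$, so the set $B=\{a: p(a)>0,\ q(a)=0\}$ has positive $P$-measure. Since $Q(B)=0$ while $P(B)>0$, $P$ is not absolutely continuous with respect to $Q$. By the convention above, $D_\mathrm{KL}(p\|q)=+\infty$, which exceeds every finite $\epsilon$. The contrapositive is the desired implication. The only care needed is to state the $0\log(0/0)=0$ and $p\log(p/0)=+\infty$ conventions explicitly, and to note that the inclusion holds almost everywhere, which is the meaningful notion for densities.

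\textbf{Second claim.} I will use a one-dimensional (or product) example and take $\supp(p)=\supp(q)=[0,1]$, so the support inclusion holds trivially. Let $q=\mathrm{Unif}[0,1]$. For $\delta\in(0,1)$, let $p_\delta$ put density $(1-\delta)/\delta$ on $[0,\delta]$ and density $\delta/(1-\delta)$ on $(\delta,1]$. Then $p_\delta$ integrates to one and has full support. Its divergence is
\[
D_\mathrm{KL}(p_\delta\|q)=(1-\delta)\log\tfrac{1-\delta}{\delta}+\delta\log\tfrac{\delta}{1-\delta}=(1-2\delta)\log\tfrac{1-\delta}{\delta}.
\]
This tends to $\infty$ as $\delta\to0$, so for any $M$ some $\delta$ gives a value greater than $M$. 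In $d$ dimensions, take the product of this density with uniform densities on the remaining coordinates. Alternatively, fix $p$ and let $q$ put only mass $\delta$ on a region where $p$ has mass $1/2$; this gives the analogous bound $\tfrac12\log\tfrac{1}{2\delta}+\dots\to\infty$.

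\textbf{Main obstacle.} There is no substantive difficulty. The main thing to get right is the measure-theoretic bookkeeping in the first part: making the "support" and "absolute continuity" conventions precise so that infinite KL follows from a positive-mass violation of the support. The explicit computation in the second part is routine.
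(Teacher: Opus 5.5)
Your proposal is correct. The first half is the paper's own argument. The paper argues by contradiction: it splits $D_\mathrm{KL}$ into the integral over $\gB=\{a:\pi_\theta(a|s)>0,\ \pi_\beta(a|s)=0\}$, which is $+\infty$, and the remainder, which it calls finite. Your route through the definition ($D_\mathrm{KL}=+\infty$ unless $P\ll Q$) is slightly cleaner. The remainder need not be finite, only bounded below (pointwise $p\log(p/q)\ge p-q$), and your definition sidesteps that issue. You also state explicitly a caveat the paper leaves implicit: the inclusion is only obtained up to null sets.

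For the second half you use a different example. The paper compares the shifted Gaussian $\gN(\mu,1)$ with $\gN(0,1)$, using the closed form $\mu^2/2$. You keep $\pi_\beta$ uniform on $[0,1]$ and concentrate $\pi_\theta$ on $[0,\delta]$, obtaining $(1-2\delta)\log\frac{1-\delta}{\delta}\to\infty$. The Gaussian computation is shorter but needs an unbounded action space. Your example lives on a compact set with equal supports, so after rescaling it fits bounded action spaces such as the $[-1,1]^d$ used in the experiments. It also directly shows the phenomenon the paper uses to motivate support constraints: KL blows up when the learned policy concentrates on a region of small behavior-policy mass. Both examples extend to $d$ dimensions by taking products.
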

\Cref{thm: KL implies support} tells us that the KL divergence constraint is more restrictive than the support constraint. This additional restriction has been found to impede the performance improvement of $\pi_\theta$ over $\pi_\beta$ \citep{mao2023str}.
While this issue can be alleviated with a small $\alpha$ in \eqref{eq: density constraint},
in practice, this can result in OOD problems due to estimation errors \citep{levine2020offline}. 

Wasserstein distance is another statistical distance used by previous works. However, constraining the Wasserstein distance cannot enforce support constraints despite 
its strong empirical performance in offline RL \citep{fql_park2025}.
Formally, we have the following result:
\begin{proposition}\label{thm: W2 neq support}
    Given a state $s\in\gS$, suppose that $\supp(\pi_\beta(\cdot|s))\neq\gA$. Then, for any $\epsilon>0$, there exists a policy $\pi_\theta$ such that $\supp(\pi_\theta(\cdot|s))\not\subseteq\supp(\pi_\beta(\cdot|s))$, but $D_\mathrm{W2}(\pi_\theta(\cdot|s)\mid\mid\pi_\beta(\cdot|s))\leq\epsilon$.
\end{proposition}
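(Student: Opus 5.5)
The plan is a mixture construction: put almost all of $\pi_\theta$'s mass on $\pi_\beta(\cdot|s)$ and a tiny amount on a point outside its support. Write $\mu \coloneqq \pi_\beta(\cdot|s)$ and assume, as is implicit in the use of $D_\mathrm{W2}$, that $\mu$ has finite second moment. Since $\supp(\mu)\neq\gA$, we can pick an action $a^\star\in\gA\setminus\supp(\mu)$. For $\lambda\in(0,1]$ define
\begin{equation*}
    \pi_\theta(\cdot|s) \coloneqq (1-\lambda)\,\mu + \lambda\,\delta_{a^\star}.
\end{equation*}
This is a valid distribution on $\gA$, and $a^\star$ carries positive mass $\lambda$. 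Hence $a^\star\in\supp(\pi_\theta(\cdot|s))$ but $a^\star\notin\supp(\mu)$, so $\supp(\pi_\theta(\cdot|s))\not\subseteq\supp(\mu)$ for every $\lambda>0$.

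Next I bound the Wasserstein distance with an explicit coupling. Keep the common mass $(1-\lambda)\mu$ in place, i.e.\ couple it along the diagonal at zero cost, and transport the remaining mass $\lambda\mu$ onto the atom $\delta_{a^\star}$. This is a valid coupling of $\pi_\theta(\cdot|s)$ and $\mu$: the first marginal is $(1-\lambda)\mu+\lambda\delta_{a^\star}$ and the second is $(1-\lambda)\mu+\lambda\mu=\mu$. Its cost gives
\begin{equation*}
    D_\mathrm{W2}^2\big(\pi_\theta(\cdot|s)\,\|\,\mu\big) \le \lambda\,\mathbb E_{a\sim\mu}\big[\|a-a^\star\|^2\big] \eqqcolon \lambda C,
\end{equation*}
where $C<\infty$ by the finite second moment of $\mu$. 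If $C=0$ the bound is trivial. Otherwise, choosing $\lambda\le\min\{1,\epsilon^2/C\}$ yields $D_\mathrm{W2}\le\epsilon$, which proves the claim.

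The only delicate points are measure-theoretic. Treating $\supp$ as the closed topological support, $\gA\setminus\supp(\mu)$ is nonempty by hypothesis, so $a^\star$ exists. If one insists on the paper's density-based definition $\{a\mid\pi_\beta(a|s)>0\}$ together with densities for $\pi_\theta$, I would replace $\delta_{a^\star}$ by a uniform distribution on a small ball $B$ around $a^\star$ meeting the complement of $\supp(\mu)$ in positive measure, chosen so that $\mu$ has finite second-moment distance to $B$. The same coupling then gives $D_\mathrm{W2}^2\le\lambda\,\sup_{b\in B}\mathbb E_{a\sim\mu}\|a-b\|^2$. I expect this regularity bookkeeping, not the coupling estimate, to be the main obstacle.
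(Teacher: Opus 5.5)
Your proof is correct, and it takes a different route from the paper. The paper performs local surgery on the density: it deletes the mass $\delta$ that $p_\beta$ puts on a small ball $\gB_1$ inside the support and translates it rigidly onto a ball $\gB_2$ outside the support. The translation coupling then gives $D_\mathrm{W2}^2\le d^2\delta$ with $d=\|a_1-a_2\|$, and the paper makes this small by placing $\gB_1$ near the boundary and $\gB_2$ next to it.

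You instead mix in a small atom, $(1-\lambda)\mu+\lambda\delta_{a^\star}$, and use the coupling $(1-\lambda)(\mathrm{id},\mathrm{id})_\#\mu+\lambda\,\delta_{a^\star}\otimes\mu$. This moves a $\lambda$-fraction of all the mass, possibly far, at cost $\lambda\,\mathbb{E}_\mu\|a-a^\star\|^2$.

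Your version is cleaner in three ways:
\begin{itemize}
\item The coupling is explicit and its marginals are obvious.
\item No geometry has to be arranged: the out-of-support point need not lie near the support. You also never need $\gB_2$ to avoid the support, which the paper's piecewise definition of $p_\theta$ silently requires for $p_\theta$ to integrate to one.
\item You get an explicit choice, $\lambda\le\epsilon^2/C$.
\end{itemize}

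The price is the finite-second-moment assumption on $\pi_\beta(\cdot|s)$. The paper's local translation does not need it, though the assumption is harmless here since the paper takes $\gA$ compact. The paper's construction also keeps $\pi_\theta$ absolutely continuous and equal to $\pi_\beta$ off two small balls, so it fits its density-based definition of support directly.

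For your fallback to that definition, you do not actually need $B$ to meet $\{a\mid\pi_\beta(a|s)=0\}$ in positive measure, which may be impossible if that set is null. Under the literal pointwise definition, any small ball around $a^\star$ already makes the mixture density positive at $a^\star$.
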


\begin{figure}[t]
    \centering
    \includegraphics[width=1\linewidth]{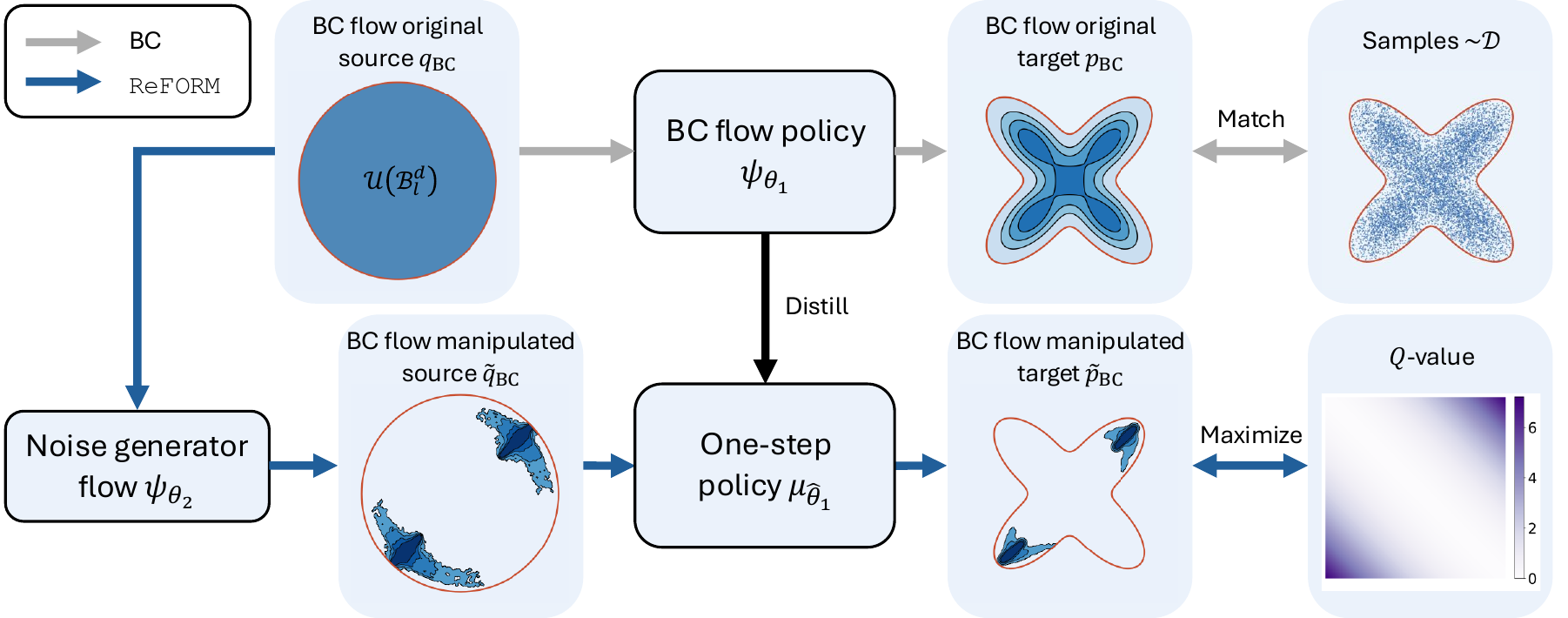}
    \caption{
    \textbf{\algo{} algorithm.} The process with \textcolor{bcgray}{gray} arrows indicates the BC flow policy, learned to transform a simple source distribution $q_\mathrm{BC}=\unif(\gB_l^d)$ to a target distribution $p_\mathrm{BC}$ that matches the dataset $\gD$. 
    The \textcolor{reformblue}{blue} arrows indicate the \algo{} process, where we learn a flow noise generator to generate a manipulated source distribution $\tilde q_\mathrm{BC}$ for the BC policy so that the manipulated target $\tilde p_\mathrm{BC}$ maximizes the $Q$ value while staying inside the support (denoted in \textcolor{supportred}{red}) of the BC policy. 
    }
    \label{fig: algorithm}
\end{figure}

Motivated by the above theoretical challenges of the KL divergence and Wasserstein distance in addressing the issue of OOD actions, we instead consider the following support-constrained policy optimization problem to tackle this issue directly.
\begin{subequations}\label{eq: support-constraint-problem}
    \begin{align}
        \max_\theta\quad & R(\pi_\theta) = \mathbb E_{\tau\sim \rho^{\pi_\theta}}\left[\sum_{h=0}^H\gamma^h r(s_h, a_h)\right], \label{eq:obj} \\
        \mathrm{s.t.}\quad & \supp(\pi_\theta(\cdot|s))\subseteq \supp(\pi_\beta(\cdot|s)),\quad \forall s\in\gS. \label{eq: support-constraint}
    \end{align}
\end{subequations}

Unfortunately, enforcing the support constraint \eqref{eq: support-constraint} is a challenging problem since (i) accurately estimating the $\supp( \pi_\beta(\cdot | s) )$ \citep{grover2018flow}, and (ii) enforcing $\supp(\pi_\theta(\cdot|s))\subseteq \supp(\pi_\beta(\cdot|s))$ given an estimate of $\supp( \pi_\beta(\cdot | s) )$ \citep{zhang2023constrained}, are both nontrivial to solve for.

To tackle these problems, we propose learning a BC flow policy $\psi_{\theta_1}(\pl t, \pl z ; \pl s)$ that transforms a source distribution $q_\mathrm{BC}$ into a state-conditioned target distribution $p_\mathrm{BC}(\pl a|\pl s) \approx \pi_\beta(\pl a| \pl s)$. 
In particular, we use a $q_\mathrm{BC}$ with bounded support
such that $\supp(\pi_\beta)$ can be approximated by the image of $\supp(q_\mathrm{BC})$ under the BC flow.  
One benefit of this approach is that this enables learning a policy that satisfies the support constraints \textit{by construction} by taking advantage of the property that for \textit{any} sample $z \in \supp( q_\mathrm{BC})$ within the (bounded) source distribution's support, $\psi_{\theta_1}(1, z ; s) \in \supp( p_\mathrm{BC}(\cdot|s) ) \approx \supp( \pi_\beta (\cdot|s))$.
Hence, we propose to construct the policy $\pi_\theta$ as the composition of some \textit{noise generator} with the BC flow $\psi_{\theta_1}$.
If the generated noise distribution $\tilde q_\mathrm{BC}$ has the same support as $q_\mathrm{BC}$, i.e., 
\begin{equation} \label{eq:tilde_q_support_constraint}
  \supp(\tilde q_\mathrm{BC})\subseteq \supp(q_\mathrm{BC})  
\end{equation}
then the pushforward of $\tilde q_\mathrm{BC}$ under $\psi_{\theta_1}$ naturally satisfies the support constraints \eqref{eq: support-constraint}.
With the support constraint \eqref{eq: support-constraint} satisfied by construction, solving the support-constrained policy optimization problem \eqref{eq: support-constraint-problem} reduces to performing unconstrained optimization of the objective \eqref{eq:obj}.
\begin{remark}
    This idea of outputting noise is not new.
    Prior works have proposed similar ``noise manipulation/steering'' techniques for fine-tuning diffusion models and flow models \citep{li2025noisear,guo2024initno,miao2025minimalist,wagenmaker2025steering}.
    One \textbf{key} difference is that we choose the source distribution of the flow model to be a distribution with \textbf{bounded} support, which enables better approximation of the support of $\pi_\beta$.
    Moreover, we propose a different form of the noise generator $\tilde{q}_\mathrm{BC}$ than prior works that maintains the high expressivity of flow-based policies.
\end{remark}
We call our method \algo{}, which we summarize in \Cref{fig: algorithm}.
In the following subsections, we elaborate on each of these components in detail.

\subsection{Flow-based behavior policy learning}
\algo{} begins by learning a BC flow policy that transforms the source distribution $q_\mathrm{BC}$ to $p_\mathrm{BC}(\cdot|s)$, which approximates $\pi_\beta(\cdot|s)$.
We choose $q_\mathrm{BC} = \unif({\gB}^d_l)$,
the uniform distribution over the $d$-dimensional hypersphere with radius $l$, so that
\begin{equation} \label{eq: q_BC_support}
    \supp( q_\mathrm{BC} ) = {\gB}^d_l \coloneqq \{ z \in \mathbb{R}^d \mid \| z \| \leq l \}.
\end{equation}
We discuss the choice of $l$ in \Cref{app: additional exp results}. 
To learn the BC flow policy $\psi_{\theta_1}$, 
we learn its corresponding velocity field $v_{\theta_1}(\pl{t}, \pl{z}; \pl{s}):[0,1]\times \gB_l^d\times\mathcal{S}\rightarrow\mathbb{R}^d$ parameterized by $\theta_1$ such that solving the ODE \eqref{eq: flow} gives actions $a = \psi_{\theta_1}(1,z;s)$ for $z\sim q_\mathrm{BC}$. 
We apply a simple linear flow for learning the velocity field following \citet{fql_park2025} with loss 
\begin{equation}
\label{eq:bc_loss}
\mathcal{L}_{\mathrm{BC}}(\theta_1)
= \mathbb{E}_{(s,a)\sim \mathcal{D}, {z \sim \unif({\gB}^d_l)}, {t \sim \unif[0,1]}}
\left[
  \left\|
    v_{\theta_1}\!\big(t,\, x_t; s\big) - (a - z)
  \right\|^2
  \right],
\end{equation}
where $x_t = (1-t)z + ta$ is the linear conditional probability path.

\subsection{Reflected Flow-based Noise Manipulation}\label{sec: reflected flow ng}

A key component in enforcing the support constraints as proposed above is the use of a noise generator with the same support as the BC flow-policy's source distribution $q_\mathrm{BC}$. 
Prior works that apply similar ``noise manipulation'' or ``noise steering'' techniques implement the generated noise $\tilde q_\mathrm{BC}$ as a truncated Gaussian (e.g., by clipping or squashing with $\tanh$). 
However, the use of a \textit{unimodal} $\tilde q_\mathrm{BC}$ severely limits the expressiveness of $\tilde q_\mathrm{BC}$ and thus also that of the resulting learned policy $\pi_\theta$.

One way to improve the expressiveness is by replacing the Gaussian distribution with a flow-based generative model, as has been done with the actions.
We propose to do the same, but to the \textit{noise} instead.
Specifically, we choose to use a flow noise generator $\psi_{\theta_2}(\pl t, \pl w; \pl s):[0, 1]\times\gB_l^d\times\gS\to\gB_l^d$ and denote its associated velocity field as $v_{\theta_2}(\pl{t}, \pl{w}; \pl{s}): [0,1]\times{\gB}^d_l\times\mathcal{S} \to\mathbb{R}^d$.
However, the support of a flow-based generative model is generally unconstrained, which violates our requirement on the support of $\tilde q_\mathrm{BC}$ \eqref{eq:tilde_q_support_constraint}.
To resolve this, we propose to use a \textit{reflected flow} \citep{xie2024rfm}, which can be used to guarantee that samples from $\psi_{\theta_2}$ are contained within $\supp( q_\mathrm{BC} )$ by
considering the following \textit{reflected} ODE \citep{xie2024rfm} instead of \eqref{eq: flow}:
\begin{equation}\label{eq: reflected ode}
    d \psi_{\theta_2}(t, w; s) = v_{\theta_2}(t, \psi_{\theta_2}(t, w; s ); s) dt + dL_t, \quad \psi_{\theta_2}(0, w; s) = w,
\end{equation}
where the reflection term $dL_t$ compensates the outward velocity at $\partial \supp( q_\mathrm{BC} )$ by pushing the motion back to $\supp( q_\mathrm{BC} )$ \citep{xie2024rfm}.

For convenience, let $\mu_{\theta_1}(\pl z; \pl s) = \psi_{\theta_1} (1, \pl z; \pl s)$ and $\mu_{\theta_2}(\pl w; \pl s) = \psi_{\theta_2}(1, \pl w;\pl s)$, and let $\mu_\theta(\pl w; \pl s) = \mu_{\theta_1}( \mu_{\theta_2}( w; s); s)$ denote their composition.
We optimize the noise generator $\psi_{\theta_2}$ to maximize the expected $Q$-value of the learned policy $\mu_\theta$
with the following loss 
\begin{align}\label{eq: actor loss}
    \mathcal{L}_\mathrm{NG}(\theta_2)
    & = \mathbb{E}_{s \sim \mathcal{D}, w \sim \unif(\gB^d_l)}
    \big[
-Q^{\mu_\theta}\left(s, \mu_{\theta_1}(\mu_{\theta_2}(w; s); s)\right) \big],
\end{align}
noting that the parameters of the BC policy $\theta_1$ stay fixed when optimizing $\theta_2$. 

We have yet to specify the reflection term $dL_t$ in \eqref{eq: reflected ode}, as many choices of $dL_t$ constrain the ODE to remain within $\supp( q_\mathrm{BC} )$.
In particular, we wish for the reflection term $dL_t$ to be robust to numerical integration.
Fortunately, $\supp( p_\mathrm{BC} ) = {\gB}^d_l$ being a hypersphere \eqref{eq: q_BC_support}
simplifies this design.
Consider solving the normal ODE \eqref{eq: flow} using the popular Euler method:
\begin{equation}\label{eq: euler}
    z_{k+1} = z_k + v_{\theta_2}(k\Delta t,w;s)\Delta t, \quad k \in \{0, \dots, N - 1\},\quad \psi_{\theta_2}(1,w;s) \gets z_N,
\end{equation}
where $N$ is the number of integration steps, $\Delta t = \frac{1}{N}$, and $z_0 = w$.
For the reflected case \eqref{eq: reflected ode}, we propose modifying the Euler method \eqref{eq: euler} by performing a projection back into the hypersphere after every Euler step.
This gives us the following reflected Euler method
 \begin{equation}\label{eq: euler reflection}
     {z}_{k+1} = \mathbf{1}\{\hat{z}_{k+1} \in \gB_l^d\} \hat{z}_{k+1 } +(1-\mathbf{1}\{\hat{z}_{k+1} \in \gB_l^d\}) \left(\hat{z}_{k+1} - \langle v_{\theta_2}(k\Delta t, w;s)\Delta t, n_{k+1}\rangle n_{k+1}\right),
 \end{equation}
where $\hat{z}_{k+1} = z_k + v_{\theta_2}(k\Delta t,w;s)\Delta t$ follows the original Euler step, $n_k = \frac{\hat{z}_{k}}{\|\hat{z}_{k}\|}$, and $\langle\cdot,\cdot\rangle$ is the inner product.
We then propose to choose $dL_t$ that is defined implicitly by the above procedure. Note that \eqref{eq: euler reflection} has the same complexity as \eqref{eq: euler}, because \eqref{eq: euler reflection} only contains one step projection.

For this to be a valid reflected flow, samples $z$ from the proposed reflected Euler method \eqref{eq: euler reflection} should satisfy the desired support constraints $z \in \supp( q_{\mathrm{BC}} ) = {\gB}^d_l$, which we formally state below.
\begin{theorem}\label{thm: reflected_flow}
The target distribution of the noise generator stays within the support of the original source distribution of the BC policy, i.e., $\supp(\tilde q_\mathrm{BC})\subseteq \supp(q_\mathrm{BC})$.
\end{theorem}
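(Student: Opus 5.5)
The plan is to prove the statement by induction on the Euler index $k$ in the reflected scheme \eqref{eq: euler reflection}, showing that every iterate satisfies $\|z_k\|\leq l$, i.e.\ $z_k\in\gB_l^d$. Since $\psi_{\theta_2}(1,w;s)$ is defined as $z_N$, this gives $\mu_{\theta_2}(w;s)\in\gB_l^d$ for every state $s$ and every input $w$. The base case holds because $z_0=w\sim\unif(\gB_l^d)$, so $z_0\in\gB_l^d$ almost surely, and in fact for every point of the support.

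For the inductive step, assume $z_k\in\gB_l^d$ and write $u = v_{\theta_2}(k\Delta t,w;s)\Delta t$, so that $\hat z_{k+1}=z_k+u$. There are two cases.
\begin{itemize}
\item If $\hat z_{k+1}\in\gB_l^d$, then $z_{k+1}=\hat z_{k+1}$ and there is nothing to show.
\item Otherwise $\|\hat z_{k+1}\|>l>0$, so $n_{k+1}=\hat z_{k+1}/\|\hat z_{k+1}\|$ is well defined. In this case $\hat z_{k+1}=\|\hat z_{k+1}\|\,n_{k+1}$ is parallel to $n_{k+1}$, so the update collapses to a scalar multiple of $n_{k+1}$:
\begin{equation*}
z_{k+1} = \big(\|\hat z_{k+1}\| - \langle u, n_{k+1}\rangle\big) n_{k+1} = \langle \hat z_{k+1}-u, n_{k+1}\rangle n_{k+1} = \langle z_k, n_{k+1}\rangle n_{k+1}.
\end{equation*}
By Cauchy--Schwarz, $\|z_{k+1}\| = |\langle z_k,n_{k+1}\rangle|\leq \|z_k\|\,\|n_{k+1}\|=\|z_k\|\leq l$, which closes the induction.
\end{itemize}
Geometrically, the reflected step replaces $\hat z_{k+1}$ by the orthogonal projection of the previous iterate onto the ray through $\hat z_{k+1}$, and such a projection can never leave a ball centered at the origin.

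The remaining step passes from pointwise containment to the support statement. The noise generator's output distribution is $\tilde q_\mathrm{BC}(\cdot|s)=(\mu_{\theta_2}(\cdot;s))_\#\unif(\gB_l^d)$. By the induction, this pushforward assigns full mass to $\gB_l^d$. Because $\gB_l^d$ is closed and the support is the smallest closed set of full measure, $\supp(\tilde q_\mathrm{BC})\subseteq\gB_l^d=\supp(q_\mathrm{BC})$ by \eqref{eq: q_BC_support}.

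I expect the main obstacle to be the algebraic observation in the second case. A naive reading of \eqref{eq: euler reflection} as "subtract the normal velocity component" does not obviously land inside the ball. The argument only works once one notices that the $\hat z_{k+1}$ and $u$ terms combine into $\langle z_k,n_{k+1}\rangle$, which in turn relies on the reflection normal being taken radially at $\hat z_{k+1}$ for a ball centered at the origin. The minor degenerate point $\hat z_{k+1}=0$ cannot arise in this branch, since that branch requires $\|\hat z_{k+1}\|>l$.
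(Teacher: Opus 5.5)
Your proposal is correct and follows essentially the same route as the paper: induction over the reflected Euler iterates, with the identity $z_{k+1} = \langle z_k, n_{k+1}\rangle n_{k+1}$ in the out-of-ball case and Cauchy--Schwarz to bound the norm by $\|z_k\|\leq l$. Your extra steps tidy up points the paper glosses over: you check that $n_{k+1}$ is well defined, you carry the induction through to $z_N$, and you use the closedness of $\gB_l^d$ to pass from full mass to support containment.
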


Combining \Cref{thm: reflected_flow} with the ideas from above then allows us to formally prove that the resulting action distribution stays within the support $\supp( p_\mathrm{BC} )$ and hence does not result in OOD actions:
\begin{theorem}\label{thm: support_cnstrnt}
The manipulated target distribution $\tilde p_\mathrm{BC}$ of the BC flow policy
remains within the support of the original BC policy,
    i.e., $\supp(\tilde p_\mathrm{BC})\subseteq \supp(p_\mathrm{BC})$. 
\end{theorem}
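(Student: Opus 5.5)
The plan is to reduce the statement to \Cref{thm: reflected_flow} together with a pointwise description of the BC flow's image. By construction, $\tilde p_\mathrm{BC}(\cdot|s)$ is the pushforward of $\tilde q_\mathrm{BC}(\cdot|s)$ under the map $\mu_{\theta_1}(\cdot;s)=\psi_{\theta_1}(1,\cdot;s)$. Likewise, $p_\mathrm{BC}(\cdot|s)$ is the pushforward of $q_\mathrm{BC}=\unif(\gB_l^d)$ under the same map. Since the BC parameters $\theta_1$ are frozen while $\theta_2$ is optimized, the same map appears in both pushforwards, and this is what the argument exploits.

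The first step is to characterize $\supp(p_\mathrm{BC}(\cdot|s))$. Assume, as is standard for a learned velocity field, that $v_{\theta_1}(\cdot,\cdot;s)$ is continuous and Lipschitz in $z$. Then $\mu_{\theta_1}(\cdot;s)$ is a homeomorphism of $\sR^d$ onto its image. It follows that
\[
\supp(p_\mathrm{BC}(\cdot|s))=\overline{\mu_{\theta_1}(\supp(q_\mathrm{BC});s)}=\mu_{\theta_1}(\gB_l^d;s).
\]
The closure can be dropped because $\gB_l^d$ is compact and the map is continuous. The argument uses the general fact that for a continuous map $f$ and a measure $\nu$ we have $\supp(f_\#\nu)=\overline{f(\supp\nu)}$. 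I would prove this directly: an open set $U$ has $f_\#\nu(U)=\nu(f^{-1}(U))>0$ if and only if $f^{-1}(U)$ meets $\supp\nu$, which holds if and only if $U$ meets $f(\supp\nu)$.

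The second step applies the same fact to $\tilde q_\mathrm{BC}$. This gives $\supp(\tilde p_\mathrm{BC})=\overline{\mu_{\theta_1}(\supp(\tilde q_\mathrm{BC});s)}$. By \Cref{thm: reflected_flow}, $\supp(\tilde q_\mathrm{BC})\subseteq\gB_l^d$, so $\mu_{\theta_1}(\supp(\tilde q_\mathrm{BC});s)\subseteq\mu_{\theta_1}(\gB_l^d;s)$. The right-hand side is closed, as the continuous image of a compact set, so taking closures preserves the inclusion and yields $\supp(\tilde p_\mathrm{BC})\subseteq\supp(p_\mathrm{BC})$. Because this holds for every state $s$, the conditional statement follows.

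The main obstacle is the support-of-pushforward identity, specifically the regularity needed for it. The inclusion we actually use requires only continuity of $\mu_{\theta_1}$ together with compactness of $\gB_l^d$, so injectivity is not essential. I would state the continuity assumption explicitly; it holds for neural velocity fields with Lipschitz activations and for the Euler discretization actually used, which is a finite composition of continuous maps. If $\supp(q_\mathrm{BC})$ were not compact we would only get the inclusion up to closure, but the ball choice \eqref{eq: q_BC_support} removes that issue.
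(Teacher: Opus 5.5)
Your proposal is correct and follows the same route as the paper. Both combine \Cref{thm: reflected_flow} with the fact that $\supp(p_\mathrm{BC})$ and $\supp(\tilde p_\mathrm{BC})$ are images of the respective source supports under the same frozen map $\psi_{\theta_1}(1,\cdot;s)$; the paper simply asserts these image formulas, whereas you justify them via $\supp(f_\#\nu)=\overline{f(\supp\nu)}$ for continuous $f$. (Compactness of $\gB_l^d$ is needed only to drop the closure and match the paper's un-closed formula, not for the inclusion itself, since $\overline{A}\subseteq\overline{B}$ whenever $A\subseteq B$.)
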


\Cref{thm: support_cnstrnt} guarantees that the learned policy provably avoids OOD actions 
\textit{without any regularization terms}.
This avoids the need for costly hyperparameter tuning for each environment and dataset, and also does not impede the potential improvement of the learned policy. 

\subsection{Policy Distillation}

One drawback of our proposed method is that computing the gradient of the actor loss $\nabla_\theta \mathcal{L}_\mathrm{NG}$ \eqref{eq: actor loss} requires computing the gradient $\nabla_{z} \mu_{\theta_1}$, which involves a long backpropagation through time (BPTT) chain since $\mu_{\theta_1}$ is evaluated with Euler integration.
To reduce the computational burden, we follow \citet{fql_park2025} and distill \citep{salimans2022progressive,geng2023one,meanflow} the learned BC flow policy by learning a one-step policy $\hat{\mu}_{\hat\theta_1}(\pl z; \pl s):\gB_l^d\times\gS\to\gA$ parameterized by $\hat\theta_1$ that directly maps the latent variable $z$ to the action $a$ with the following distillation loss:
\begin{equation}
    \mathcal L_\mathrm{Distill}(\hat\theta_1) = \mathbb{E}_{s\sim\gD,z\sim\unif(\gB^d_l)}\left[\|\mu_{\hat\theta_1}(z; s) - \mu_{\theta_1}(z; s)\|^2\right].
\end{equation}

\section{Experiments}\label{sec: exp}

In this section, we conduct experiments to answer the following research questions. Additional details for our implementation, environments, and algorithm hyperparameters, and full results with more ablations are provided in \Cref{app: experiments}.

\begin{enumerate}[topsep=0pt,partopsep=0pt,parsep=0pt,label={(\bfseries Q\arabic*):}]
    \item How does \algo{} perform compared to other offline RL algorithms with flow policies? 
    \item Does \algo{} avoid the OOD issue without limiting the performance improvement?
\item Is it necessary for the BC policy's source distribution to have bounded support?
    \item Is the reflected flow necessary for generating the targeted noise?
    \item How is our design of the reflection term?
    \item Is the distillation of the BC flow policy necessary?
\end{enumerate}

\subsection{Setup}

\textbf{Environments. }
We evaluate \algo{} and the baselines on $40$ tasks from the OGBench offline RL benchmark \citep{ogbench_park2025} designed in $4$ environments, including locomotion tasks and manipulation tasks. 
We use two kinds of datasets, \datasetname{clean} and \datasetname{noisy}. 
The \datasetname{clean} dataset consists of random environment trajectories generated by an expert policy.
The \datasetname{noisy} dataset consists of random trajectories generated by a highly suboptimal and noisy policy. 

\textbf{Baselines. }
We compare \ourname{} with the state-of-the-art offline RL algorithms with flow policies, including Flow Q-Learning (\baselinename{FQL}) \citep{fql_park2025}, Implicit Flow Q-Learning (\baselinename{IFQL}) \citep{fql_park2025}, and Diffusion Steering via RL (\baselinename{DSRL}) \citep{wagenmaker2025steering}. 
Since \baselinename{FQL}'s performance highly depends on the $\alpha$ hyperparameter (\Cref{eq: density constraint}), we consider three variants of \baselinename{FQL}: \baselinename{FQL(M)} uses the $\alpha^*$ that is \textit{hand-tuned} for each environment using the \textsc{clean} dataset by \citet{fql_park2025}, \baselinename{FQL(S)} uses $\alpha = \alpha^*/10$, and \baselinename{FQL(L)} uses $\alpha = 10 \cdot \alpha^*$. 
\baselinename{IFQL} is the flow version of IDQL \citep{hansenestruch2023idql} implemented in \citet{fql_park2025}. For \baselinename{DSRL}, we use the \textit{hand-tuned} noise bound by \citet{wagenmaker2025steering}. 
Note that \ourname{} uses the \textit{same} hyperparameters across \textit{all} tasks. 

\textbf{Evaluation Metrics. }
We run each algorithm with $3$ different seeds for each task and evaluate each converged model on $32$ different initial conditions. We define the \textit{normalized score} for each task as the return normalized by the minimum and maximum returns across all algorithms.

\begin{figure}[t]
    \centering
    \includegraphics[width=\linewidth]{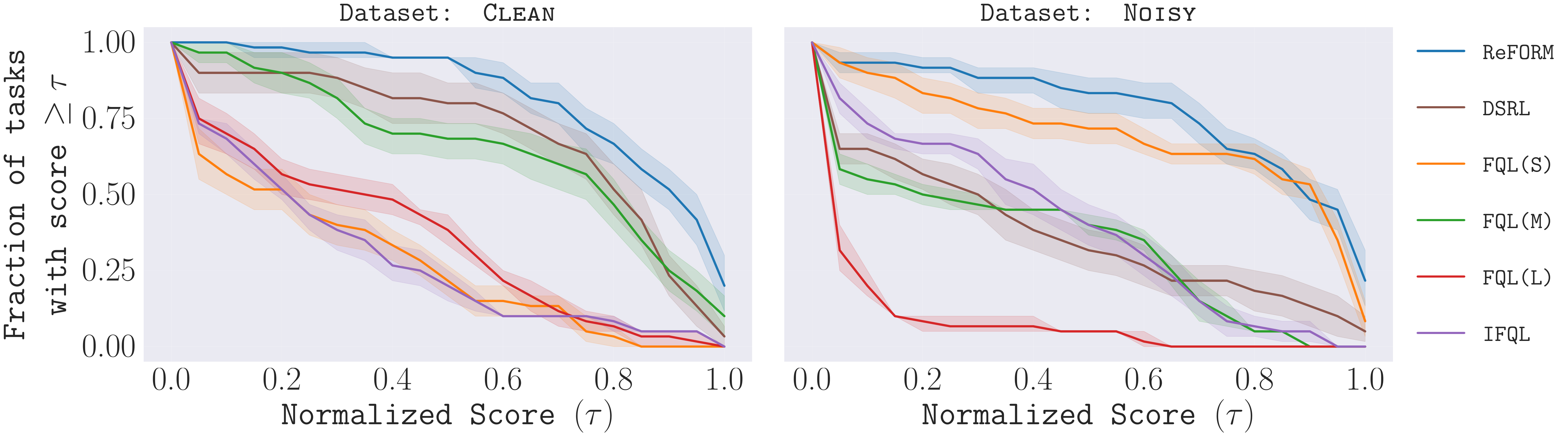}
    \caption{\textbf{Performance profile over \datasetname{clean} and \datasetname{noisy} datasets.} For a given normalized score $\tau$ (x-axis), the performance profile shows the probability that a given method achieves a score $\geq\tau$ (see \citet{agarwal2021deep} for details). On the \datasetname{clean} dataset, \ourname{} achieves greater scores with higher probabilities than all other baselines. The same is true on the \datasetname{noisy} dataset except for a small set of normalized scores around $0.9$ where \ourname{} and \baselinename{FQL(S)} have similar probabilities within the statistical margins.}
    \label{fig: performance-all}
\end{figure}

\subsection{Main results}

\textbf{(Q1): \ourname{} achieves the best overall performance with a constant set of hyperparameters.}
As recommended by \citet{agarwal2021deep}, we plot the performance profile over all tasks with different datasets in \Cref{fig: performance-all}. 
It is clear that \ourname{} achieves the best performance for both the \datasetname{clean} and \datasetname{noisy} datasets. 
For the \datasetname{clean} dataset, \baselinename{DSRL} and \baselinename{FQL(M)} achieve the second and third best respective performance because their hyperparameters are specifically hand-tuned for these environments. 
However, for the \datasetname{noisy} dataset, the performance of both \baselinename{DSRL} and \baselinename{FQL(M)} drops significantly, whereas \baselinename{FQL(S)} becomes the second-best method behind \ourname{}. This highlights the hyperparameter sensitivity of the baseline methods. Moreover, we observe that when the behavior policy performs poorly (i.e., on \datasetname{noisy}), a stronger density-based regularization impedes the ability of the learned policy to improve (see \baselinename{FQL(L)}). 

Importantly, \ourname{} achieves the highest fraction on normalized scores close to $1$, indicating that \ourname{} does not limit the improvement of the learned policy as discussed in \Cref{sec: method}.
The use of a support constraint allows the learned policy to apply any action with the support, including ones that have low density under the behavior policy $\pi_\beta$ 
Therefore, the learned policy does not suffer from a performance upper bound related to the behavior policy. 

\textbf{(Q2): \ourname{} maximizes the performance while avoiding OOD.}
We design a toy example to better visualize and compare the learned policies.
The toy example has a 2-dimensional action space with a $Q$-value that grows when approaching the lower left and the upper right corners (see the $Q$-value plot in \Cref{fig: algorithm}).
We plot the policy distributions of BC and all algorithms in \Cref{fig: toy example}. \ourname{} maximizes performance by reaching both corners while staying within the support of the BC policy.
\baselinename{DSRL} collapses to a single mode in the upper right corner and remains far from the boundaries of the support because the generated noise of \baselinename{DSRL} is unimodal and squashed.
We compare the generated noise in more detail in \Cref{app: additional exp results}, \Cref{fig: viz-noise-toy}. 
\baselinename{IFQL} remains similar to the BC policy because importance sampling is less efficient for finding the maximum.
\baselinename{FQL} faces OOD error due to its use of Wasserstein distance regularization (as discussed in \Cref{thm: W2 neq support}).

\subsection{Ablation studies}\label{sec: ablations}

\begin{figure}[t]
    \centering
\begin{subfigure}{.142\linewidth}
        \centering
        \includegraphics[width=\linewidth]{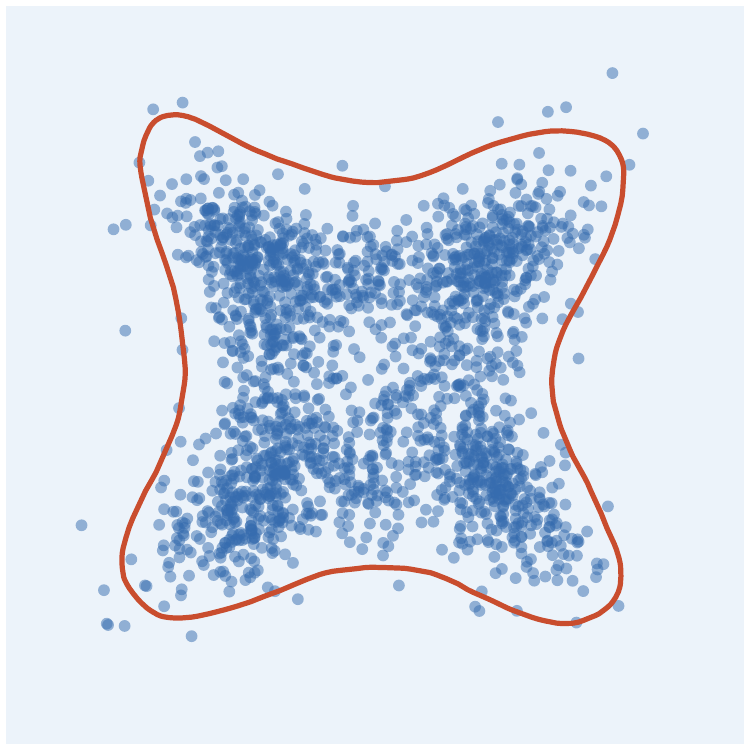}
        \caption{BC.}
    \end{subfigure}\begin{subfigure}{.142\linewidth}
        \centering
        \includegraphics[width=\linewidth]{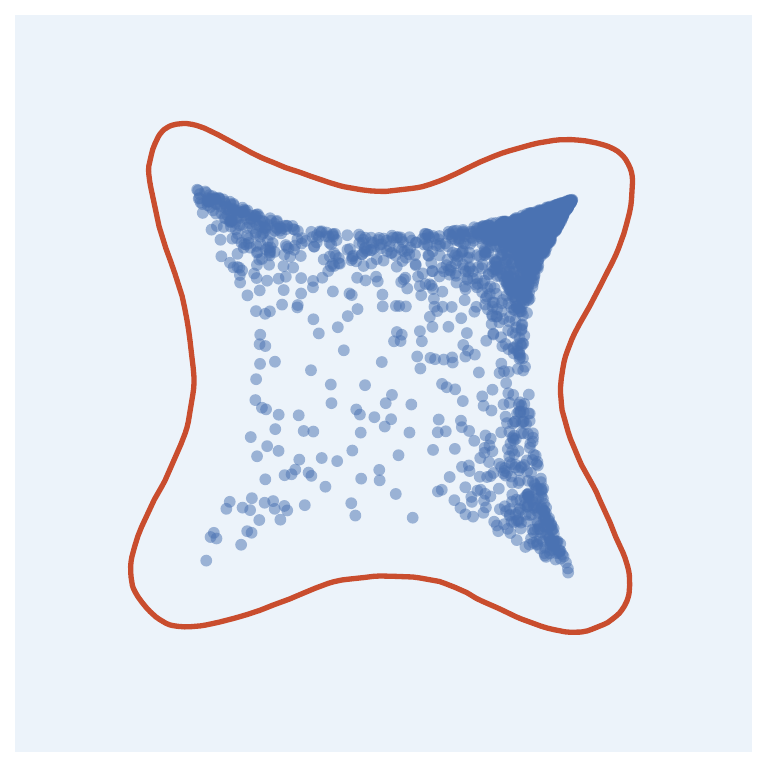}
        \caption{\baselinename{DSRL}.}
    \end{subfigure}\begin{subfigure}{.142\linewidth}
        \centering
        \includegraphics[width=\linewidth]{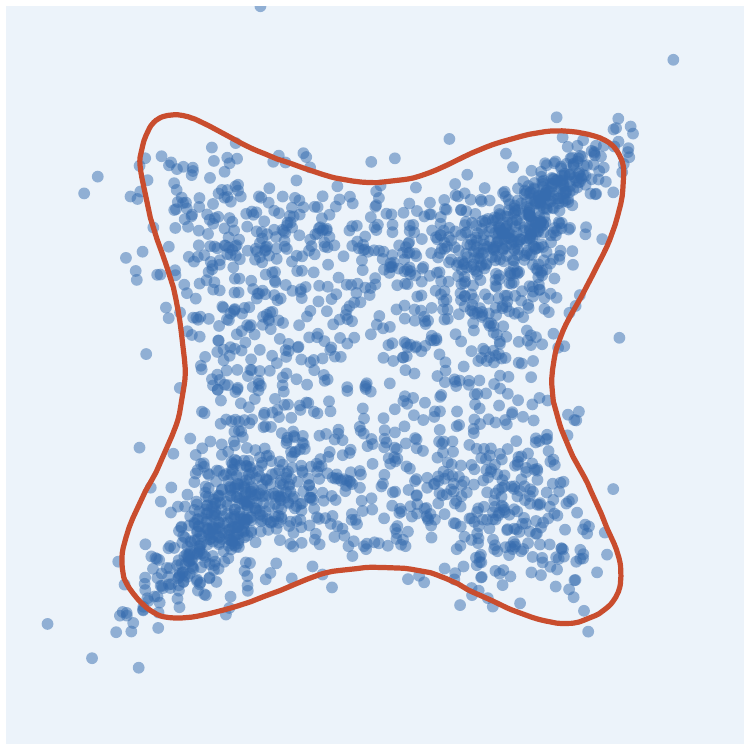}
        \caption{\baselinename{IFQL}.}
    \end{subfigure}\begin{subfigure}{.142\linewidth}
        \centering
        \includegraphics[width=\linewidth]{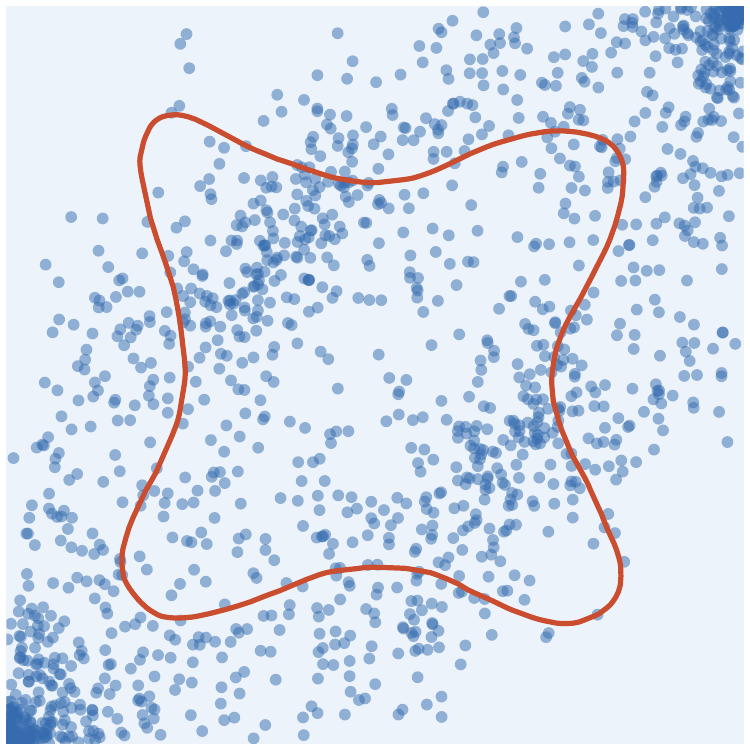}
        \caption{\baselinename{FQL(S)}.}
    \end{subfigure}\begin{subfigure}{.142\linewidth}
        \centering
        \includegraphics[width=\linewidth]{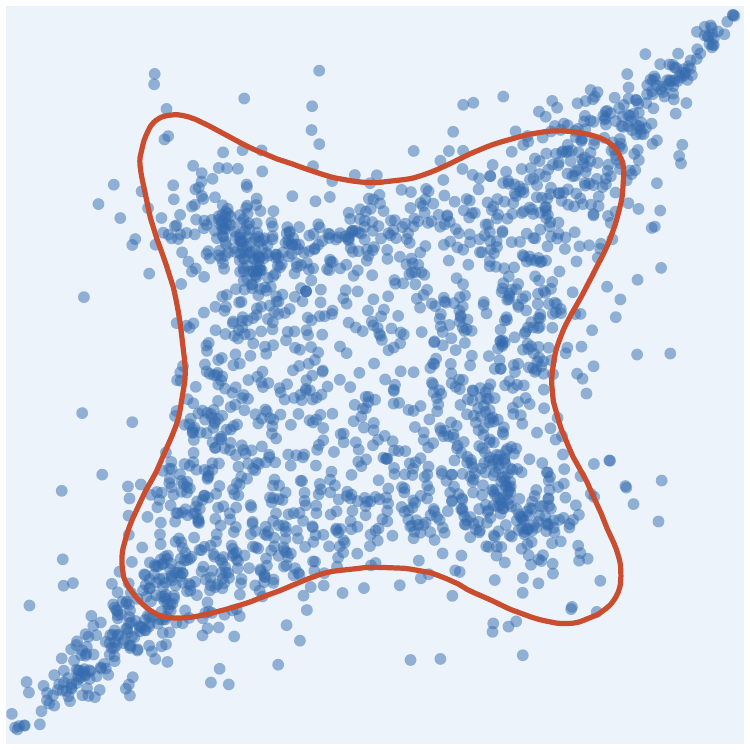}
        \caption{\baselinename{FQL(M)}.}
    \end{subfigure}\begin{subfigure}{.142\linewidth}
        \centering
        \includegraphics[width=\linewidth]{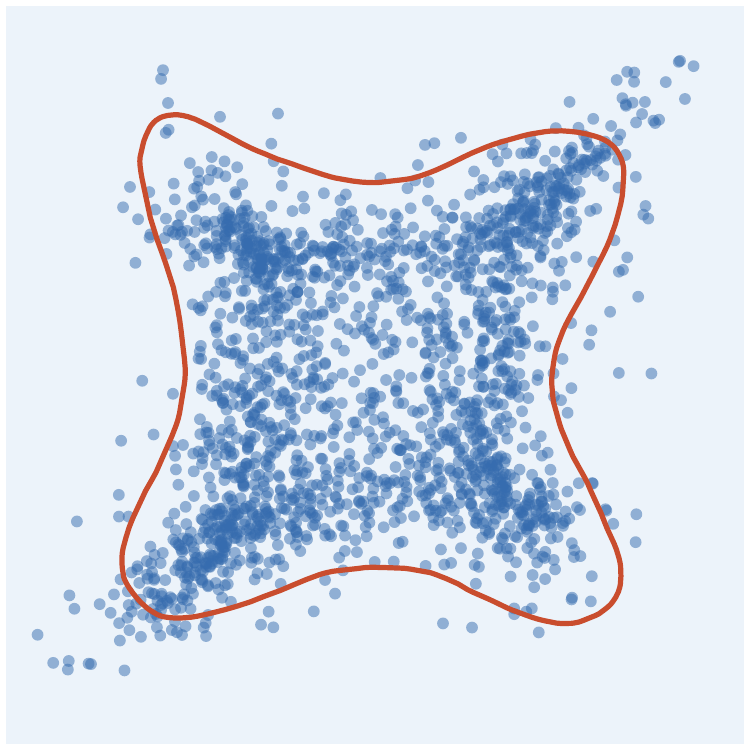}
        \caption{\baselinename{FQL(L)}.}
    \end{subfigure}\begin{subfigure}{.142\linewidth}
        \centering
        \includegraphics[width=\linewidth]{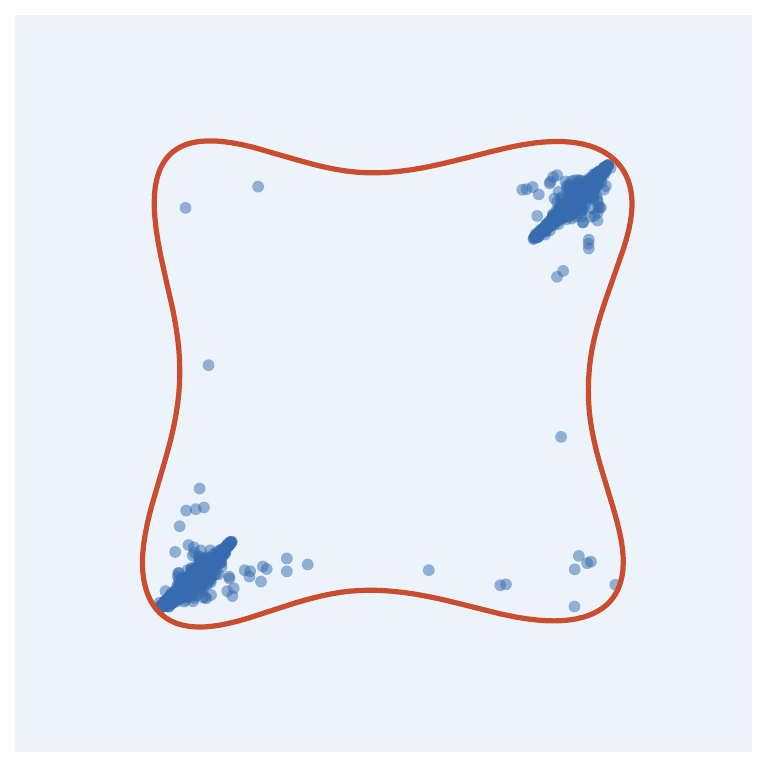}
        \caption{\ourname{}\footnotemark.}
        \setcounter{fnmarkcntr}{\value{footnote}}
    \end{subfigure}\caption{\textbf{Learned policy distributions with the toy example.} The $Q$-value reaches the maximum at the lower left and upper right corners (See the $Q$-value plot in \Cref{fig: algorithm}). The \textcolor{supportred}{red} boundaries denote the estimated $\supp(\pi_\mathrm{BC})$\protect\footnotemark.}
    \label{fig: toy example}
\end{figure}

\setcounter{footnote}{\value{fnmarkcntr}}

\footnotetext{This plotted support slightly differs because $q_\mathrm{BC}=\unif(\gB_l^d)$ for \algo{}, but $q_\mathrm{BC}=\gN(0, I^d)$ for others.}

\addtocounter{footnote}{1}

\footnotetext{The support estimation has some numerical errors, so a few samples of BC/IFQL can be outside.}

\begin{figure}[t]
    \centering
    \includegraphics[height=3.5cm,valign=t]{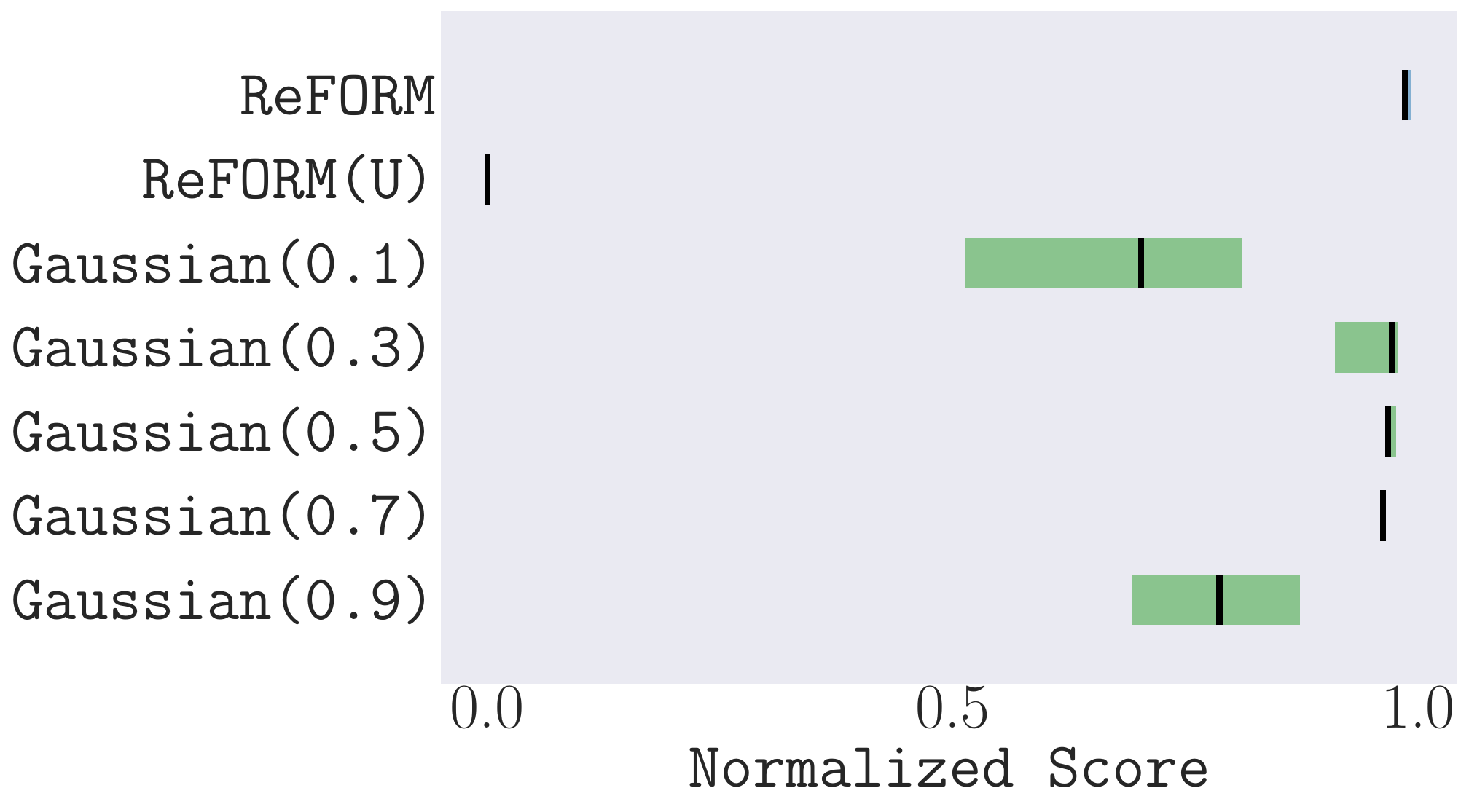}
    \includegraphics[height=3.7cm,valign=t]{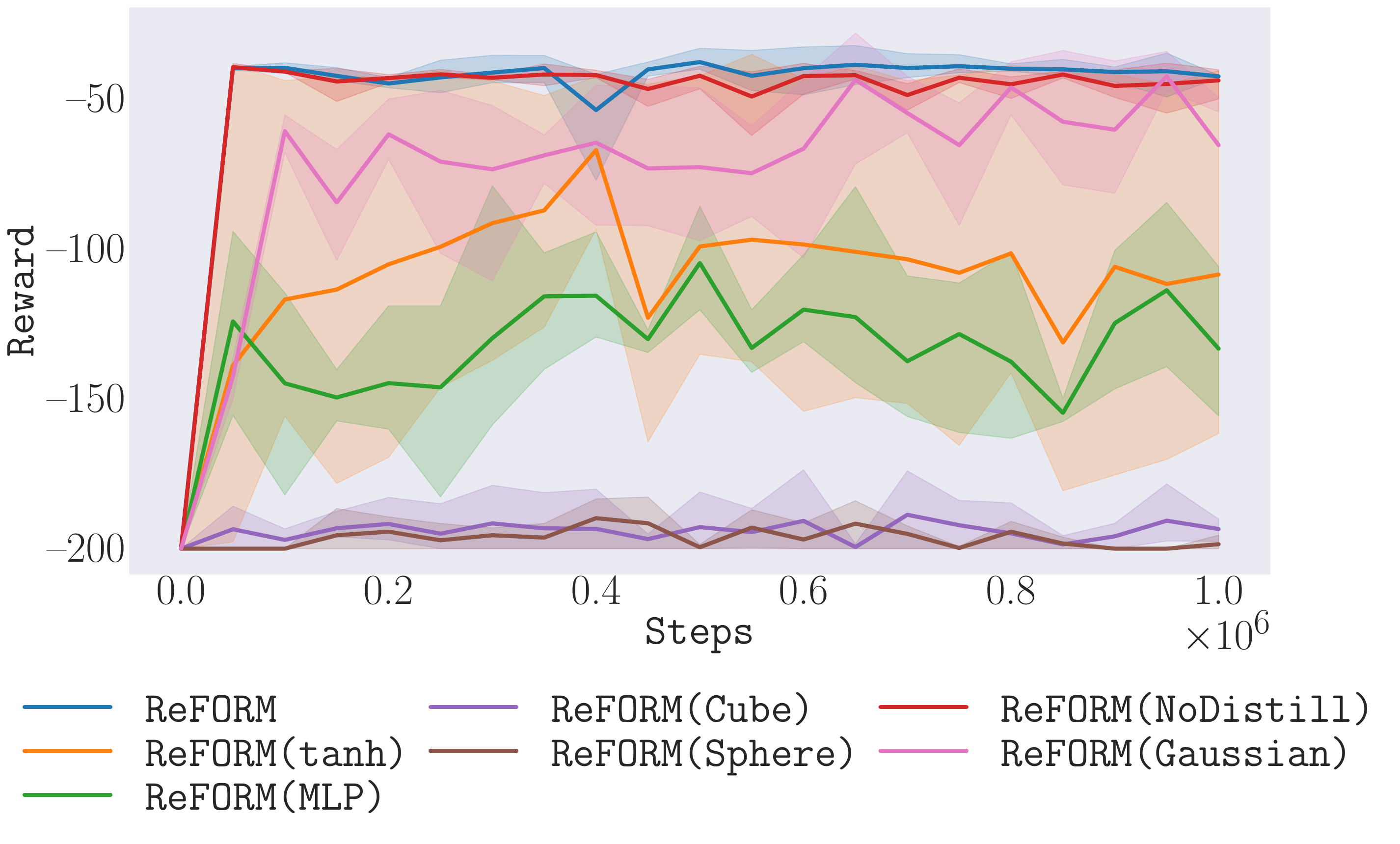}
    \caption{\textbf{Ablations.} 
    Left: normalized scores of \ourname{} and its variants with different source distributions.
Right: training curves of \ourname{} and its variants by changing its components.
}
    \label{fig: ablations}
\end{figure}

To study the functionality of each component of \ourname{}, we conduct the following experiments in a toy environment and the \envname{cube-single} environment with the \datasetname{noisy} dataset to answer Q3-Q5.
All details can be found in~\Cref{app:ablation_details}.

\textbf{(Q3): Having bounded support for the BC flow policy's source distribution is crucial.}
We investigate the effect of satisfying support constraints (and hence the necessity of using a source distribution with bounded support)
by using a Gaussian $\gN(0, I^d)$ with unbounded support as the source distribution for both the BC flow policy and the flow noise generator following \citet{wagenmaker2025steering} (\baselinename{ReFORM(U)}).
\Cref{fig: ablations} shows that \baselinename{ReFORM(U)} suffers from severe OOD problems and does not learn anything.
This confirms that the ability of \algo{} to satisfy support constraints using a source distribution with bounded support is crucial to good performance.

We next change the source distribution of the flow noise generator of \baselinename{ReFORM(U)} back to $\unif(\gB_l^d)$ while keeping $q_\mathrm{BC} = \gN(0, I^d)$ for the BC flow policy. We also add the reflection term back to the noise generator. 
We change $l$ so that $\gB_l^d$ is the $\xi$-confidence level of $q_\mathrm{BC}$.
We vary $\xi$ within $\{0.1, 0.3, 0.5, 0.7, 0.9\}$ (\baselinename{Gaussian($\xi$)}). 
These baselines are highly sensitive to the choice of $\xi$, whereas \ourname{} both avoids this additional hand-tuned hyperparameter $\xi$ and achieves better performance than the best performing \baselinename{Gaussian($\xi$)} (\Cref{fig: ablations}).

\textbf{(Q4): The reflected flow improves the quality of the generated noise.} 
We consider replacing the reflected flow with three different generative models that also generate noise within the hypersphere $\gB^d_l$: a MLP noise generator (\baselinename{ReFORM(MLP)}), a ``squashed flow'' that applies a $\tanh$ at the end (\baselinename{ReFORM(tanh)}), and a squashed Gaussian (\baselinename{ReFORM(Gaussian)}) similar to \baselinename{DSRL}.
All baselines perform worse than \ourname{} (\Cref{fig: ablations}): the MLP and the Gaussian fail to capture multimodal distributions, while $\mathrm{tanh}$ squashing suffers from gradient vanishing. 

\textbf{(Q5): Our design of the reflection term works the best within our considered choices.}
We consider two other options for the reflection term. 
First, \baselinename{ReFORM(Cube)} replaces our hypersphere-shaped domain $\gB_l^d$ with a hypercube-shaped domain, while still applying the reflection term as introduced in \citet{xie2024rfm}. 
Second, \baselinename{ReFORM(Sphere)} shares our hypersphere-shaped domain, but instead of compensating the outbound velocity, it reflects the outbound velocity back inbound once the sample hits $\partial \gB_l^d$.
\Cref{fig: ablations} shows that these two variants cannot perform similarly to \ourname{}. We hypothesize that compensating for the outbound velocities makes the training process more stable than reflecting the outbound velocities. 
We leave finding theoretical explanations of this phenomenon to future work.

\textbf{(Q6): Removing the BC flow policy distillation slightly degrades the performance of \ourname{}.} We compare \ourname{} with its variant \baselinename{ReFORM(NoDistill)} by removing the distillation of the BC flow policy. \Cref{fig: ablations} shows that \baselinename{ReFORM(NoDistill)}'s performance decreases slightly compared with \ourname{}. This suggests that a longer backpropagation chain can be harmful, which matches the observation in \citet{fql_park2025}.

\vspace{-1em}
\section{Conclusion}

We propose \algo{} for realizing the support constraint with flow policies in offline RL.
\algo{} simultaneously learns a BC flow policy that transforms a bounded uniform distribution in a hypersphere to the complex action distribution that matches the behavior policy, and a flow noise generator that transforms a bounded uniform distribution to a complex noise distribution being fed into the BC policy. 
With reflected flow on the noise generator, the noise generator is capable of generating complex multimodal noise while staying within the domain of the prior distribution of the BC policy. Therefore, \algo{} avoids the OOD issues by construction, putting no further constraints limiting the performance of the learned policy, and learns a complex multimodal policy. Our extensive experiments on $40$ challenging tasks with the OGBench offline RL benchmark suggest that \algo{} achieves the best performance with only a single set of hyperparameters, eliminating the costly fine-tuning process of most offline RL methods. The reflected flow noise generator can also be potentially combined with other generative-model-based policies, including diffusion policies.

\paragraph{Limitations.}
We identify several promising avenues for future work. Although our distillation step avoids BPTT through the BC flow, training the noise generator still relies on BPTT, which can be computationally intensive for deep models. This process can be potentially improved with shortcut models \citep{sorl}, or by applying a pre-trained BC model and latent space RL \citep{wagenmaker2025steering}. Furthermore, our method ensures that the policy $\pi_\theta$ remains within the support of the BC policy, meaning that it inherits any potential OOD errors made by the BC model itself. Integrating behavior cloning methods with stricter support constraints, diagnosing when the BC model generates OOD errors, or applying a pre-trained BC model could mitigate this dependence. Moreover, the design of the reflection term is a nascent area, and exploring more adaptive or even learned reflection terms presents an exciting direction for developing more powerful policy improvement methods.
In addition, \algo{} applies the simplest value function learning method and actor-critic structure similar to \citet{fql_park2025}, which can be potentially improved by other methods \citep{mao2023svr,xql,liu2024adaptive,agrawalla2025floq}.
Finally, \algo{} learns slower than algorithms imposing statistical distance regularization when the dataset contains expert policies due to the lack of any explicit regularization to keep the learned policy close to the expert policy. 

\section*{Reproducibility statement}

For better reproducibility, we provide all the proofs of theoretical results in \Cref{app: proof}, and implementation details, including all hyperparameters in each environment of all algorithms in \Cref{app: implementation}. The benchmark we use is open-source and published in \citet{ogbench_park2025}. We also included the source code of our algorithm in the supplementary materials. 

\section*{Acknowledgements}

This material is based upon work supported by the Under Secretary of Defense for Research and Engineering under Air Force Contract No. FA8702-15-D-0001 or FA8702-25-D-B002. In addition, Zhang, So, and Fan are supported by the National Science Foundation (NSF) CAREER Award \#CCF-2238030. This work was also supported in part by a grant from Amazon. Any opinions, findings, conclusions or recommendations expressed in this material are those of the author(s) and do not necessarily reflect the views of the Under Secretary of Defense for Research and Engineering.

© 2025 Massachusetts Institute of Technology.

Delivered to the U.S. Government with Unlimited Rights, as defined in DFARS Part 252.227-7013 or 7014 (Feb 2014). Notwithstanding any copyright notice, U.S. Government rights in this work are defined by DFARS 252.227-7013 or DFARS 252.227-7014 as detailed above. Use of this work other than as specifically authorized by the U.S. Government may violate any copyrights that exist in this work.

\bibliography{camera_ready}
\bibliographystyle{iclr2026_conference}

\newpage
\appendix
\section{Proofs}\label{app: proof}

\subsection{Proof of \Cref{thm: KL implies support}}

\begin{proof}
    We first prove the first statement.   
    We prove this by contradiction. Suppose $\supp(\pi_\theta(\cdot|s))\not\subseteq\supp(\pi_\beta(\cdot|s))$. Then, there exists a region $\gB = \{a\in\gA\mid\pi_\theta(a|s)>0, \pi_\beta(a|s)=0\}$ with a non-zero measure. By the definition of the KL divergence, we have
    \begin{equation}
    \begin{aligned}
        D_\mathrm{KL}(\pi_\theta(\cdot|s)\mid\mid\pi_\beta(\cdot|s)) &= \int_{a\in\gB} \pi_\theta(a|s)\log\frac{\pi_\theta(a|s)}{\pi_\beta(a|s)}da + \int_{a\in\gA\setminus\gB}\pi_\theta(a|s)\log\frac{\pi_\theta(a|s)}{\pi_\beta(a|s)}da,
    \end{aligned}
    \end{equation}
    where the first term is $\infty$ and the second term is finite. Therefore, we have $D_\mathrm{KL}(\pi_\theta(\cdot|s)\mid\mid\pi_\beta(\cdot|s)) = \infty$, which contradicts the condition that $D_\mathrm{KL}(\pi_\theta(\cdot|s)\mid\mid\pi_\beta(\cdot|s))\leq\epsilon<\infty$. 

    We then prove the second statement. 
    Consider $\pi_\theta(\cdot|s) = \gN(\mu, 1)$ and $\pi_\beta(\cdot|s) = \gN(0, 1)$. We have $\supp(\pi_\theta(\cdot|s))\subseteq\supp(\pi_\beta(\cdot|s))$. The KL divergence between them is
    \begin{equation}
        D_\mathrm{KL}(\pi_\theta(\cdot|s)\mid\mid\pi_\beta(\cdot|s)) = \frac{\mu^2}{2}.
    \end{equation}
    Therefore, for any $M > 0$, we can choose $\mu>\sqrt{2M}$ so that $D_\mathrm{KL}(\pi_\theta(\cdot|s)\mid\mid\pi_\beta(\cdot|s))>M$.
\end{proof}

\subsection{Proof of \Cref{thm: W2 neq support}}

\begin{proof}
    For simplicity, consider a given state $s\in\gS$. We define $p_\beta(\cdot) = \pi_\beta(\cdot|s)$ and $p_\theta(\cdot) = \pi_\theta(\cdot|s)$. 
    We prove by construction. 
We consider the optimal transport problem. First, we define a source region within the support of $p_\beta$. Consider a small ball $\gB_1\in\supp(p_\beta)$ centered at $a_1$. The probability mass in the ball is $\delta = \int_{\gB_1} p_\beta(a)da$. Second, we define a target region. Consider another small ball $\gB_2\not\subset\supp(p_\beta)$ centered at $a_2$ with the same radius as $\gB_1$. Let the distance between the two balls be $d = \|a_1-a_2\|$. We define the new probability $p_\theta$ such that 
    \begin{equation}
        p_\theta(a) = 
        \begin{cases} 
        p_\beta(a), & \text{if } a \notin \gB_1 \text{ and } a \notin \gB_2, \\
        0, & \text{if } a \in \gB_1, \\ 
        p_\beta(a - a_2 + a_1), & \text{if } a \in \gB_2 ,
        \end{cases}
    \end{equation}
Then, we have $\supp(p_\theta)\not\subseteq\supp(p_\beta)$. We make $d\leq \sqrt{\frac{\epsilon^2}{\delta}}$ by choosing the source region $\gB_1$ close to the boundary of $\supp(p_\beta)$ and the target region $\gB_2$ close to $\gB_1$. Then, we have
    \begin{equation}
        \begin{aligned}
        D_\mathrm{W2}(p_\theta\mid\mid p_\beta)^2\leq \int_{a\in\gB_1} \|d\|^2p_\beta(a)da = d^2\int_{a\in\gB_1}p_\beta(a)da = d^2\delta \leq \epsilon^2.
        \end{aligned}
    \end{equation}
    Therefore, we have $D_\mathrm{W2}(p_\theta\mid\mid p_\beta)\leq\epsilon$.
\end{proof}

\subsection{Proof of \Cref{thm: reflected_flow}}

\begin{proof}
    Remember that the source distribution of the BC flow policy is $q_\mathrm{BC}=\unif(\gB_l^d)$. 
    We prove the theorem by showing that $z_k\in\unif(\gB_l^d)$ for all $k\in\{0, 1, \dots, N-1\}$, which implies that $z\in\gB_l^d$, for all $z\sim \tilde q_\mathrm{BC}$. We prove this by induction. 

    First, we have $z_0 = w \in \gB_d^l$ because $w\sim\unif(\gB_l^d)$. Next, we assume that $z_k\in\unif(\gB_l^d)$. Then, we have the following two cases:

    \textbf{Case 1:  $\|\hat z_{k+1}\|\le l$.} Following \Cref{eq: euler reflection}, we have $z_{k+1} = \hat z_{k+1}\in\gB_l^d$.

    \textbf{Case 2: $\|\hat z_{k+1}\|> l$.} Following \Cref{eq: euler reflection}, we have
    \begin{equation}
        \begin{aligned}
            z_{k+1} &= \hat z_{k+1} - \langle v_{\theta_2}(k\Delta t,  w; s)\Delta t, n_{k+1} \rangle n_{k+1}  \\
            &= \left(\|\hat z_{k+1}\| - \langle v_{\theta_2}(k\Delta t,  w; s)\Delta t, n_{k+1} \rangle\right)\, n_{k+1}.
        \end{aligned}
    \end{equation}
    In addition, we have
    \begin{equation}
        \langle v_{\theta_2}(k\Delta t,  w; s)\Delta t, n_{k+1} \rangle
        = \langle \hat z_{k+1} - z_k, n_{k+1} \rangle
        = \|\hat z_{k+1}\| - \langle z_k, n_{k+1} \rangle.
    \end{equation}
Plugging this into the previous equation, we get, 
    \begin{equation}
        z_{k+1} = \langle z_k, n_{k+1} \rangle\, n_{k+1}.
    \end{equation}
    Hence, we get,
    \begin{equation}
        \|z_{k+1}\| = \left|\langle z_k, n_{k+1} \rangle\right| \leq \|z_k\| \le l
    \end{equation}
    Thus our reflection ensures $z_{k+1} \in \mathcal{B}^l_d$, $\forall k$. Therefore, we have $z = z_N\in\gB_l^d$, for all $z\sim \tilde q_\mathrm{BC}$. As a result, $\supp(\tilde q_\mathrm{BC})\subseteq\supp(q_\mathrm{BC})$.
\end{proof}

\subsection{Proof of \Cref{thm: support_cnstrnt}}

\begin{proof}
    Let $\tilde z\sim \tilde q_\mathrm{BC}$ be a sample from $\tilde q_\mathrm{BC}$. We have $\tilde z \in \supp(\tilde q_\mathrm{BC})$. Following \Cref{thm: reflected_flow}, we have $\supp(\tilde q_\mathrm{BC})\subseteq\supp(q_\mathrm{BC})$. Therefore, we have $\tilde z \in\supp(q_\mathrm{BC})$. Now consider the original target distribution $p_\mathrm{BC}$. Its support is the set of all points generated by applying the flow $\psi_{\theta_1}$ to all points in the support of $q_\mathrm{BC}$, i.e., 
    \begin{equation}
        \supp(p_\mathrm{BC}) = \{\psi_{\theta_1}(1, z; s)\mid z\in\supp(q_\mathrm{BC})\}.
    \end{equation}
    Since we have $\tilde z \in\supp(q_\mathrm{BC})$, then by definition, we have $\psi_{\theta_1}(1, \tilde z; s) \in \supp(p_\mathrm{BC})$. This is true for all $\tilde z\sim \tilde q_\mathrm{BC}$. Therefore, by the definition of the support of $\tilde p_\mathrm{BC}$, i.e.,
    \begin{equation}
        \supp(\tilde p_\mathrm{BC}) = \{\psi_{\theta_1}(1, \tilde z; s)\mid \tilde z\in\supp(\tilde q_\mathrm{BC})\},
    \end{equation}
    we have $\supp(\tilde p_\mathrm{BC})\subseteq \supp(p_\mathrm{BC})$.
\end{proof}

\section{Algorithm Details}

We provide the step-by-step explanation of \algo{} in \Cref{alg: reform}, where $\mathrm{RF}(v, s, w, N)$ means solving the reflected ODE \eqref{eq: reflected ode} following the projected Euler step \eqref{eq: euler reflection} with the velocity field $v$, state $s$, sample from the source distribution $w$, and number of Euler steps $N$.

\begin{algorithm}[t]
\caption{ReFORM Algorithm}
\label{alg: reform}
\begin{algorithmic}[1]
\State \textbf{Input:} Offline dataset $\mathcal{D}$;  total Euler number of steps $N$, radius $l$ 
\State \textbf{Networks:} Critic $Q_\phi(\pl s,\pl a)$; BC flow field $v_{\theta_1}(\pl t, \pl z; \pl s)$; noise flow field $v_{\theta_2}(\pl t, \pl w; \pl s)$; one-step BC flow policy $\mu_{\hat\theta_1}(\pl z; \pl s)$.

\While{not converged}
    \State Sample batch $\{(s, a, r, s')\} \sim \gD$
    \State 
    \State {\bfseries $\triangleright$ Critic update}
    \State $w\sim\unif(\gB_l^d)$
    \State $z\gets\mathrm{RF}(v_{\theta_2}, s', w, N)$
    \State $a'\gets\mu_{\hat\theta_1}(z; s')$
    \State Update $\phi$ to minimize $\mathbb{E}\left[(r+\gamma Q_{\hat\phi}(s',a') - Q_\phi(s, a))^2)\right]$
    \State
    \State {\bfseries \(\triangleright\) Train vector field \(v_{\theta_1}\) in the BC flow policy \(\mu_{\theta_1}\)}
    \State \(z \sim \unif(\gB_l^d)\)
    \State \(x_1 \gets a\)
    \State \(t \sim \unif[0,1]\)
    \State \(x_t \gets (1-t)\,z + t\,x_1\)
    \State Update \(\theta_1\) to minimize
    $\mathbb{E}\left[\|v_{\theta_1}(t, x_t;s)-(x_1 - z)\|^2\right]$
    \State
    \State {\bfseries $\triangleright$ Train one-step policy $\mu_{\hat\theta_1}$}
    \State $z \sim \unif(\gB_l^d)$
    \State $a^{\mu_1} \gets \mu_{\hat\theta_1}(z;s)$
    \State Update $\hat\theta_1$ to minimize $\mathbb{E}\left[\|a^{\mu_1} - \mu_{\theta_1}(z; s)\|^2\right]$
    \State
    \State {\bfseries $\triangleright$ Train vector field $v_{\theta_2}$ in the flow noise generator $\mu_{\theta_2}$}
    \State $w \sim \unif(\gB_l^d)$
    \State $z \gets \mathrm{RF}(v_{\theta_2}, s, w, N)$
    \State $a^{\mu_2}\gets\mu_{\theta_1}(z;s)$
    \State Update $\theta_2$ to minimize $\mathbb{E}\left[-Q_\phi(s, a^{\mu_2})\right]$
\EndWhile
\end{algorithmic}
\end{algorithm}

\section{Experiments}\label{app: experiments}

\subsection{Computation resources}

The experiments are run on a 13th Gen Intel(R) Core(TM) i7-13700KF CPU with 64GB RAM and an NVIDIA GeForce RTX 4090 GPU. The training time is around $80$ minutes for $10^6$ steps for \algo{}. 

\subsection{Environments}

We conduct experiments on the recently published OGBench benchmark \citep{ogbench_park2025}. We use $4$ environments ($1$ locomotion environment and $3$ manipulation environments), $5$ tasks in each environment, with $2$ different datasets, for a total $40$ tasks. Since OGBench was originally designed for offline goal-conditioned RL, we use the single-task variants ("\envname{-singletask}") for OGBench tasks to benchmark standard reward-maximizing offline RL. The reward functions in OGBench are semi-sparse. For the locomotion task, the reward functions are always $-1$ for not reaching the goal and $0$ for reaching the goal. Manipulation tasks usually contain several subtasks, and the rewards are bounded by $-n_\mathrm{task}$ and $0$, where $n_\mathrm{task}$ is the number of subtasks. All episodes end when the agent achieves the goal.

In our experiments, we consider the following tasks with the \datasetname{clean} dataset, where the demonstrations are randomly generated by an expert policy:
\begin{itemize}
    \item \envname{antmaze-large-navigate-singletask-task\{1,2,3,4,5\}-v0}
    \item \envname{cube-single-play-singletask-task\{1,2,3,4,5\}-v0}
    \item \envname{cube-double-play-singletask-task\{1,2,3,4,5\}-v0}
    \item \envname{scene-play-singletask-task\{1,2,3,4,5\}-v0}
\end{itemize}
We also consider the \datasetname{noisy} dataset, where the demonstrations are randomly generated by a highly suboptimal and noisy policy:
\begin{itemize}
    \item \envname{antmaze-large-explore-singletask-task\{1,2,3,4,5\}-v0}
    \item \envname{cube-single-noisy-singletask-task\{1,2,3,4,5\}-v0}
    \item \envname{cube-double-noisy-singletask-task\{1,2,3,4,5\}-v0}
    \item \envname{scene-noisy-singletask-task\{1,2,3,4,5\}-v0}
\end{itemize}
More details about the environment and videos of the demonstrations can be found in the OGBench paper \citep{ogbench_park2025}.

\subsection{Implementation details and hyperparameters}\label{app: implementation}

\begin{table}[t]
    \centering
    \caption{Training steps for all algorithms for each task.}
    \label{tab: training steps}
    \begin{tabular}{l|cc}
        \toprule
        Task & Dataset & Training step \\
        \midrule
        \envname{antmaze-large-navigate-singletask-task\{1,2,3,4,5\}-v0} & \datasetname{clean} & $1\times10^7$ \\
        \envname{antmaze-large-explore-singletask-task\{1,2,3,4,5\}-v0} & \datasetname{noisy} & $8\times10^6$ \\
        \envname{cube-single-play-singletask-task\{1,2,3,4,5\}-v0} & \datasetname{clean} & $2\times10^6$ \\
        \envname{cube-single-noisy-singletask-task\{1,2,3,4,5\}-v0} & \datasetname{noisy} & $3\times10^6$ \\
        \envname{cube-double-play-singletask-task\{1,2,3,4,5\}-v0} & \datasetname{clean} & $2\times10^6$ \\
        \envname{cube-double-noisy-singletask-task\{1,2,3,4,5\}-v0} & \datasetname{noisy} & $1\times10^6$ \\
        \envname{scene-play-singletask-task1-v0} & \datasetname{clean} & $2\times10^6$ \\
        \envname{scene-play-singletask-task\{2,3,4,5\}-v0} & \datasetname{clean} & $3\times10^6$ \\
        \envname{scene-noisy-singletask-task\{1,2\}-v0} & \datasetname{noisy} & $1\times10^6$ \\
        \envname{scene-noisy-singletask-task\{3,4,5\}-v0} & \datasetname{noisy} & $2\times10^6$ \\
        \bottomrule
    \end{tabular}
\end{table}

\subsubsection{Details of \algo{}}

\paragraph{Flow policies.}
We parameterize the velocity fields of the BC flow policy $v_{\theta_1}$ and the flow noise generator $v_{\theta_2}$ with MLPs. We use the Euler method to solve ODE \eqref{eq: flow} for the BC flow policy, and the projected Euler step \eqref{eq: euler reflection} to solve the reflected ODE \eqref{eq: reflected ode} for the flow noise generator. $10$ Euler steps are used for both Euler integration for all environments.

\paragraph{$Q$-functions.}
Following the standard implementation of $Q$-functions in RL, we train two $Q$ functions to improve stability. Two aggregation methods are used to aggregate the two $Q$-values for different environments following \citet{fql_park2025}. For most environments, we take the mean of the two $Q$-values for aggregation \citep{ball2023efficient,nauman2024bigger}, except for the \envname{antmaze-large} environment, where we take the minimum of the two $Q$-values \citep{van2016deep,fujimoto2018addressing}.

\paragraph{Selection of the radius of the hypersphere $\gB_l^d$.}
As the action space for physical systems is always compact, we select the hypersphere $\gB_l^d$ to be the smallest hypersphere that contains the action space, i.e., $l = \min_{l'}\{l'\in\mathbb{R}^d \mid \mathcal{A} \subseteq \gB_l^d \}$. Note that, as the action space $\mathcal{A}$ is known and is usually a hyperbox, in most cases, we can compute the solution easily, or, otherwise, use an overapproximation of $\gB_{l}^d$. Therefore, this choice does not impose any limitation on our approach. We also present experimental results of the sensitivity of \algo{} w.r.t. $l$ in \Cref{app: additional exp results}.

\paragraph{Neural Network architectures.}
For all neural networks in our experiments, we use MLPs with $4$ hidden layers and $512$ neurons on each layer. We apply layer normalization \citep{ba2016layer} to the $Q$-function networks to stabilize training. 

\paragraph{Training and evaluation.}
The difficulty of tasks in OGBench can be very different. Therefore, we use different training steps for different tasks (\Cref{tab: training steps}). For each task, we train each algorithm with $3$ different seeds and evaluate the model saved at the last epoch for $32$ episodes. 

\begin{table}[t]
    \centering
    \caption{Common hyperparameters for all algorithms.}
    \label{tab: common hyperparams}
    \begin{tabular}{l|l}
        \toprule
        Hyperparameter & Value \\
        \midrule
        Learning rate & $0.0003$ \\
        Optimizer & Adam \citep{adam} \\
        Maximum gradient norm & $10$ \\
        Target network smoothing coefficient & $0.005$ \\
        Discount factor $\gamma$ & $0.995$ \\
        MLP dimensions & $[512, 512, 512, 512]$ \\
        Nonlinearity & GELU \citep{hendrycks2016gelu} \\
        Flow steps & $10$ \\
        Flow time sampling distribution & $\unif[0, 1]$ \\
        Minibatch size & $256$ \\
        Clipped double $Q$-learning & False (default), True (\envname{antmaze-large}) \\
        \bottomrule
    \end{tabular}
\end{table}

\begin{table}[t]
    \caption{Environment-specific hyperparameters for \baselinename{FQL} and \baselinename{DSRL}.}
    \label{tab: hyperparams fql dsrl}
    \centering
    \begin{tabular}{l|cccc}
        \toprule
        Environment & \baselinename{FQL(S)}$\alpha$ & \baselinename{FQL(M)}$\alpha$ & \baselinename{FQL(L)}$\alpha$ & Noise bound for \baselinename{DSRL} \\
        \midrule
        \envname{antmaze-large} & $1$ & $10$ & $100$ & $[-1.25, 1.25]$ \\
        \envname{cube-single} & $30$ & $300$ & $3000$ & $[-0.5, 0.5]$ \\
        \envname{cube-double} & $30$ & $300$ & $3000$ & $[-1.5, 1.5]$ \\
        \envname{scene} & $30$ & $300$ & $3000$ & $[-0.75, 0.75]$ \\
        \bottomrule
    \end{tabular}
\end{table}

\subsubsection{Details of baselines in main results} 

We choose the state-of-the-art offline RL methods with flow policies as our baselines, including \baselinename{FQL} \citep{fql_park2025}, \baselinename{IFQL} \citep{hansenestruch2023idql,fql_park2025}, and \baselinename{DSRL} \citep{wagenmaker2025steering}. We implement the baselines \baselinename{FQL} and \baselinename{IFQL} following the original implementation provided in \citet{fql_park2025}, and \baselinename{DSRL} also following the original implementation provided in \citet{wagenmaker2025steering}.

\subsubsection{Details of baselines in ablation studies}\label{app:ablation_details}

\paragraph{\baselinename{ReFORM(U)}.}
\baselinename{ReFORM(U)} modifies \ourname{} by changing the source distribution of both the BC policy and the noise generator from $\unif(\gB_l^d)$ to $\gN(0, I^d)$. In other words, we have $q_\mathrm{NG} = q_\mathrm{BC} = \gN(0, I^d)$ for \baselinename{ReFORM(U)}.

\paragraph{\baselinename{Gaussian($\xi$)}.}
\baselinename{Gaussian($\xi$)} modifies \ourname{} by changing the source distribution of the BC policy from $\unif(\gB_l^d)$ to $\gN(0, I^d)$, then choose $l$ so that $\gB_l^d$ is the $\xi$-confidence level of $\gN(0, I^d)$, i.e., $l = \sqrt{\mathrm{PPF}_{\chi_d^2}(\xi)}$, where $\mathrm{PPF}_{\chi_d^2}$ is the percent point function of a $d$-dimensional $\chi^2$ distribution. 

\paragraph{\baselinename{ReFORM(MLP)}.}
\baselinename{ReFORM(MLP)} modifies \ourname{} by changing the reflected flow noise generator to an MLP noise generator $f(\pl s): \gS\to\gB_l^d$, which maps the state to a point within $\supp(q_\mathrm{BC})$. 

\paragraph{\baselinename{ReFORM($\mathrm{tanh}$)}.}
\baselinename{ReFORM($\mathrm{tanh}$)} modifies \ourname{} by removing the reflection term in the reflection ODE \eqref{eq: reflected ode}, i.e., using \eqref{eq: euler} instead of \eqref{eq: euler reflection} when integrating the noise generator flow. Then, after the Euler integration and getting $\hat z$ following \eqref{eq: euler}, we use $\mathrm{tanh}$ to squash the norm of $z$ so that it stays within $\gB_l^d$. In other words, $z= \frac{\hat z}{\|\hat z\|}\cdot\mathrm{tanh}(\|\hat z\|)\cdot l$.

\paragraph{\baselinename{ReFORM(cube)}.}
\baselinename{ReFORM(cube)} modifies \ourname{} by changing the domain of $q_\mathrm{NG}$ and $q_\mathrm{BC}$ to $[-1, 1]^d$. Then, the reflected ODE is solved by first using the Euler integration \eqref{eq: euler} to get $\hat z$, and then applying $z = 1 - |(\hat z + 1)\mathrm{mod} 4 - 2|$ following \citet{xie2024rfm}.

\paragraph{\baselinename{ReFORM(sphere)}.}
\baselinename{ReFORM(sphere)} modifies \ourname{} by changing the reflection term from compensating the outbound velocity to ``bouncing back'', like billiards. 

\paragraph{\baselinename{ReFORM(NoDistill)}.}
\baselinename{ReFORM(NoDistill)} removes the distillation part of \ourname{}, i.e., the actor loss \eqref{eq: actor loss} is backpropagated through the BC flow policy instead of the one-step policy. 

\subsubsection{Hyperparameters}

The choice of hyperparameters largely follows \citet{fql_park2025}. We provide the common hyperparameters shared for all algorithms in \Cref{tab: common hyperparams}, and the environment-specific hyperparameters for \baselinename{FQL} and \baselinename{DSRL} in \Cref{tab: hyperparams fql dsrl}. Note that all environment-specific hyperparameters for \baselinename{FQL(M)} and \baselinename{DSRL} are the same as provided in their original papers (with the \datasetname{clean} dataset), which are hand-tuned for each environment. As the baselines were not tested on the \datasetname{noisy} dataset in their original papers, we use the same hyperparameters for them in the same environment with the \datasetname{clean} dataset.

\begin{figure}[t]
    \centering
    \includegraphics[width=0.24\linewidth]{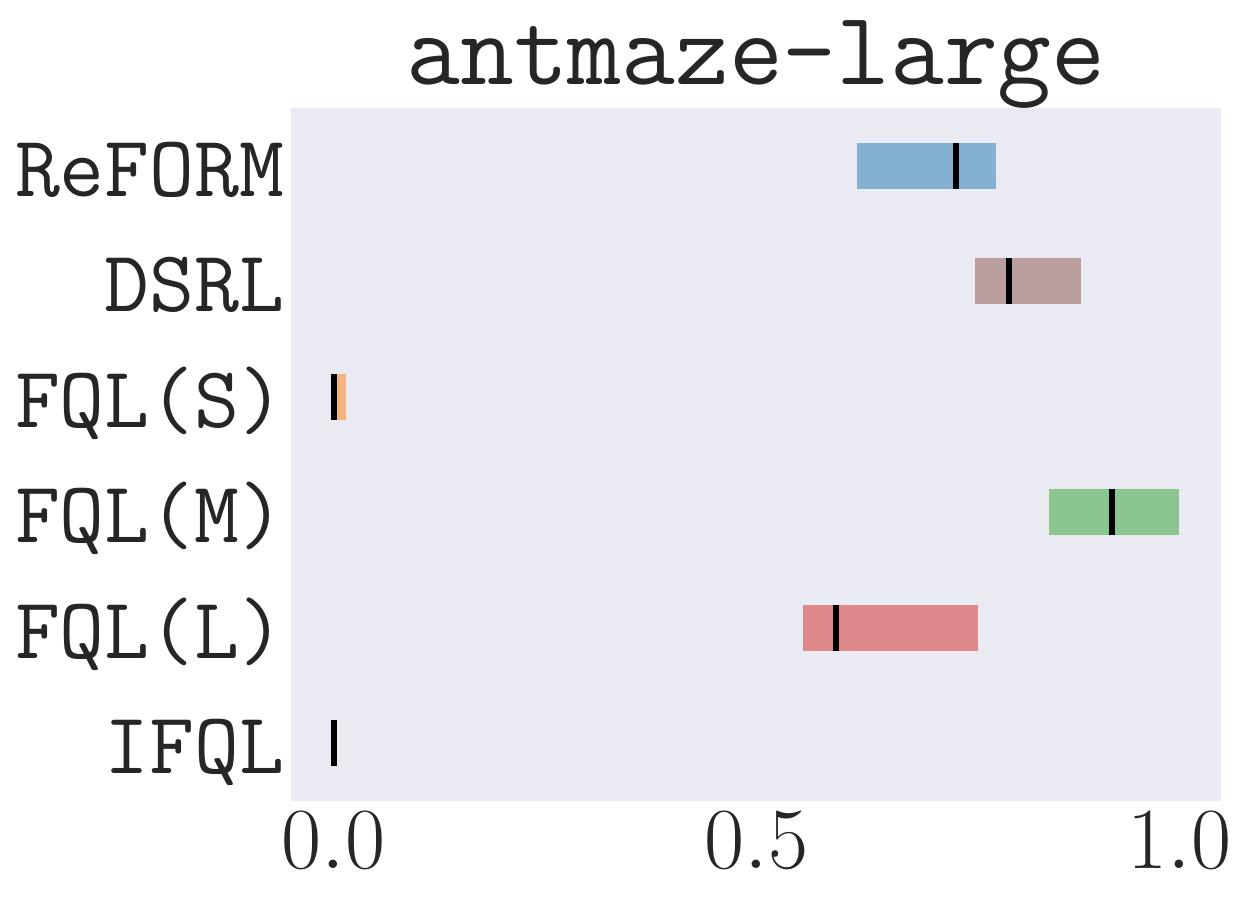}
    \includegraphics[width=0.24\linewidth]{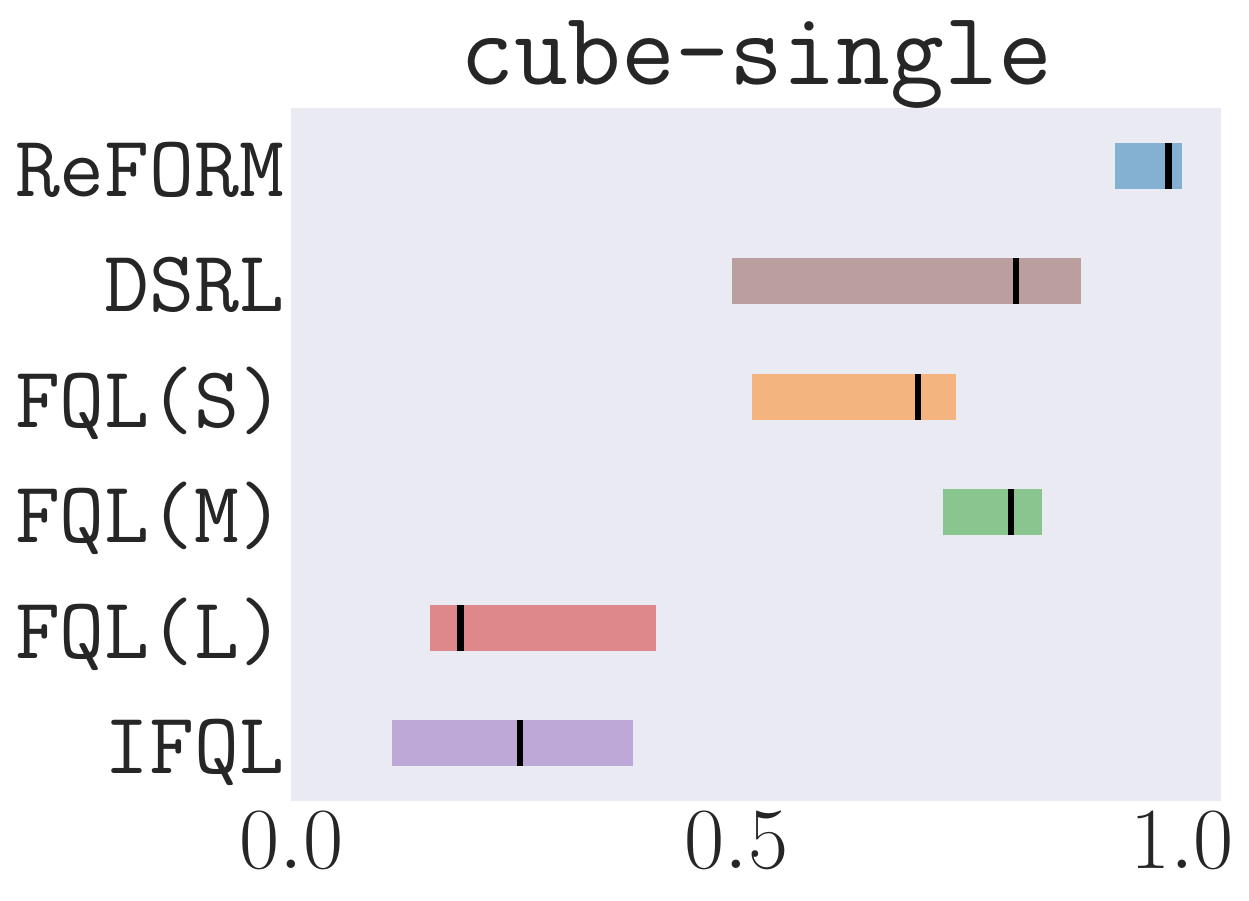}
    \includegraphics[width=0.24\linewidth]{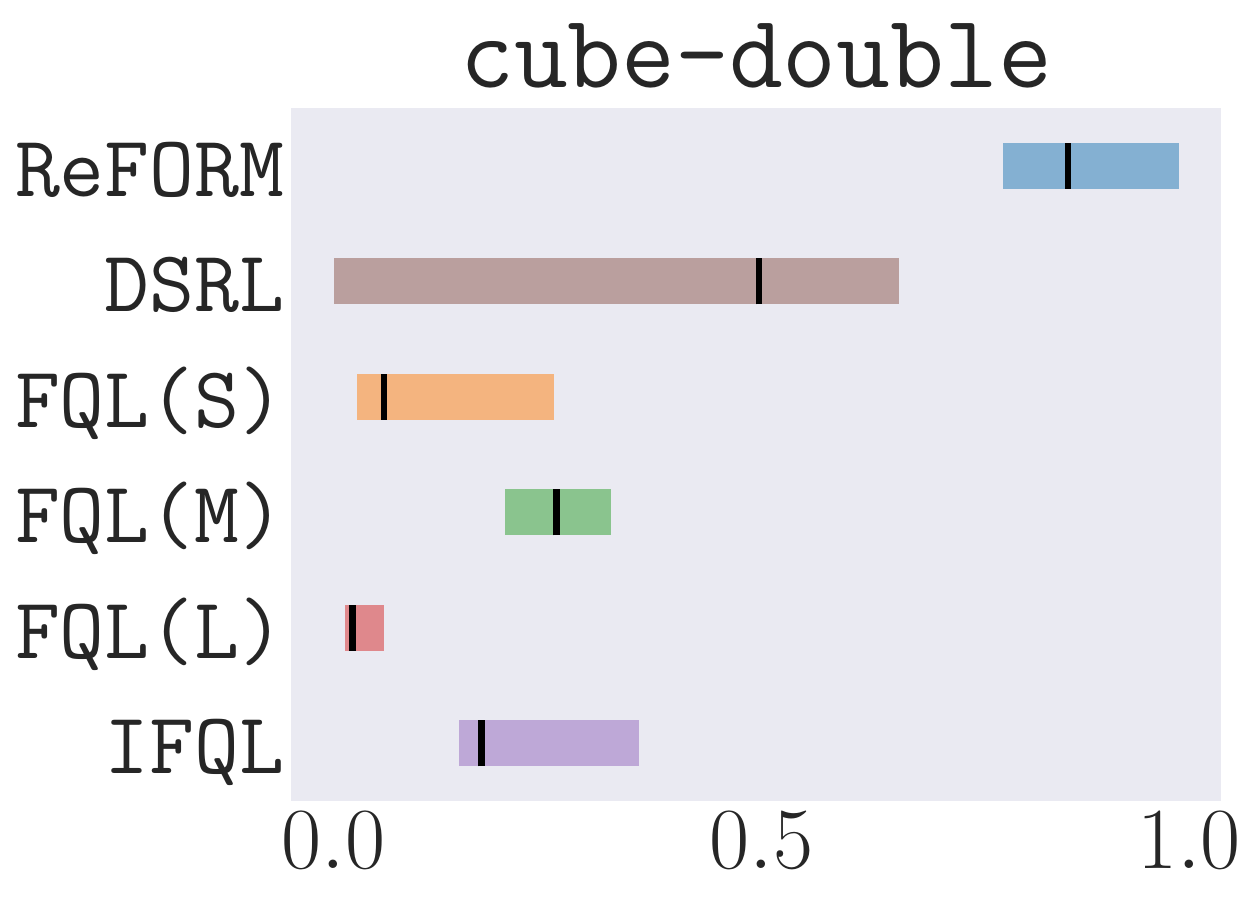}
    \includegraphics[width=0.24\linewidth]{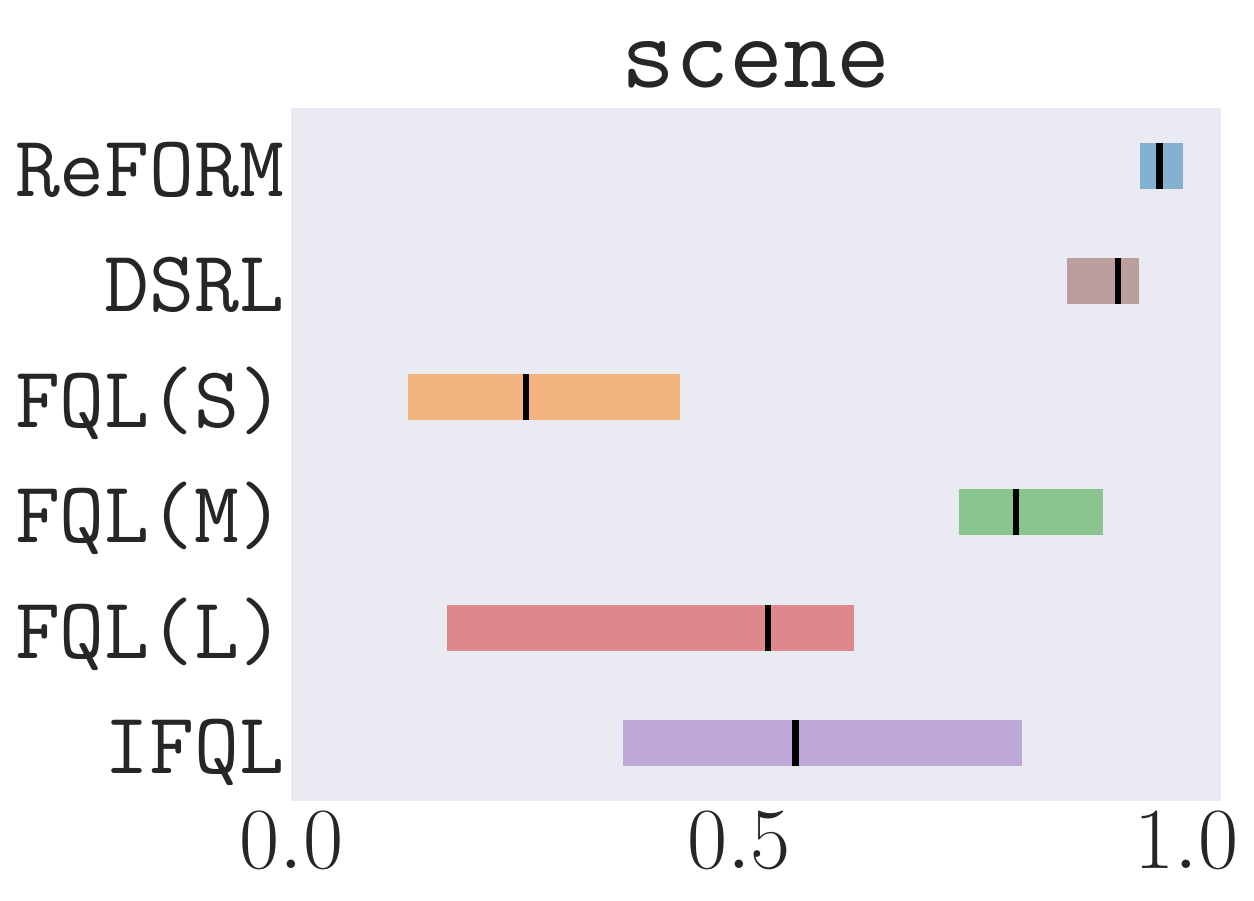}
    \caption{{Normalized scores with the \textsc{clean} dataset.}}
    \label{fig: iqm-clean}
\end{figure}

\begin{figure}[t]
    \centering
    \includegraphics[width=0.24\linewidth]{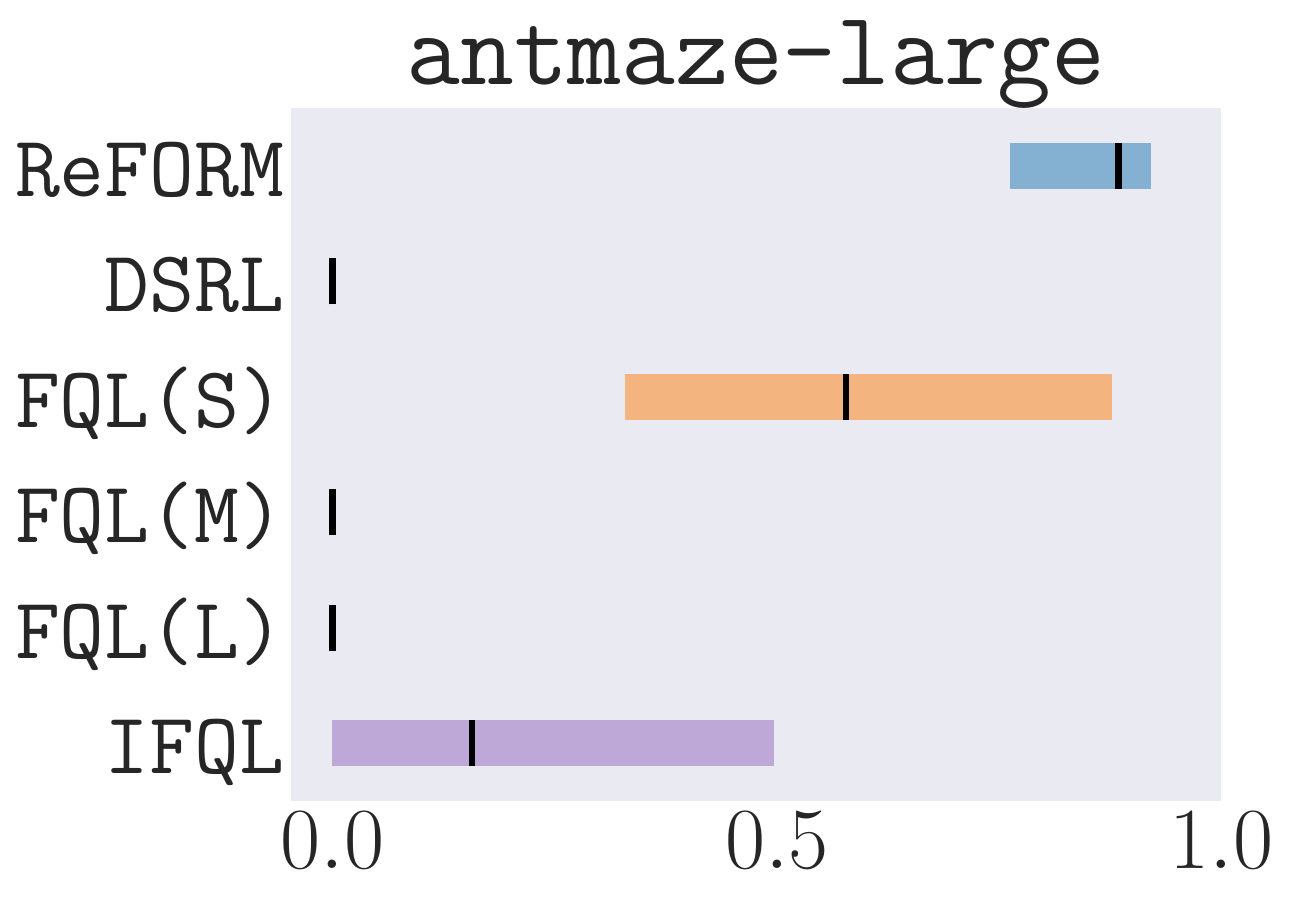}
    \includegraphics[width=0.24\linewidth]{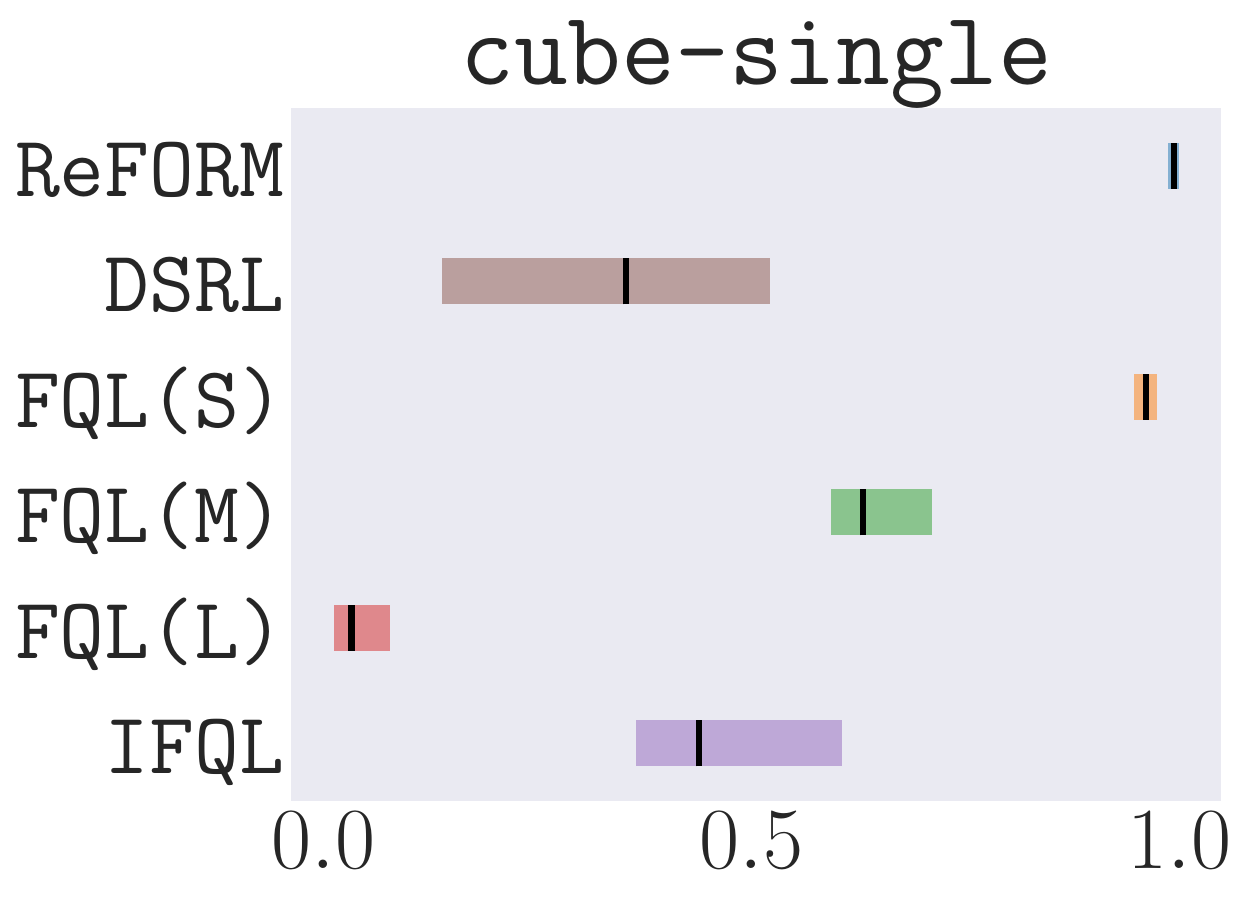}
    \includegraphics[width=0.24\linewidth]{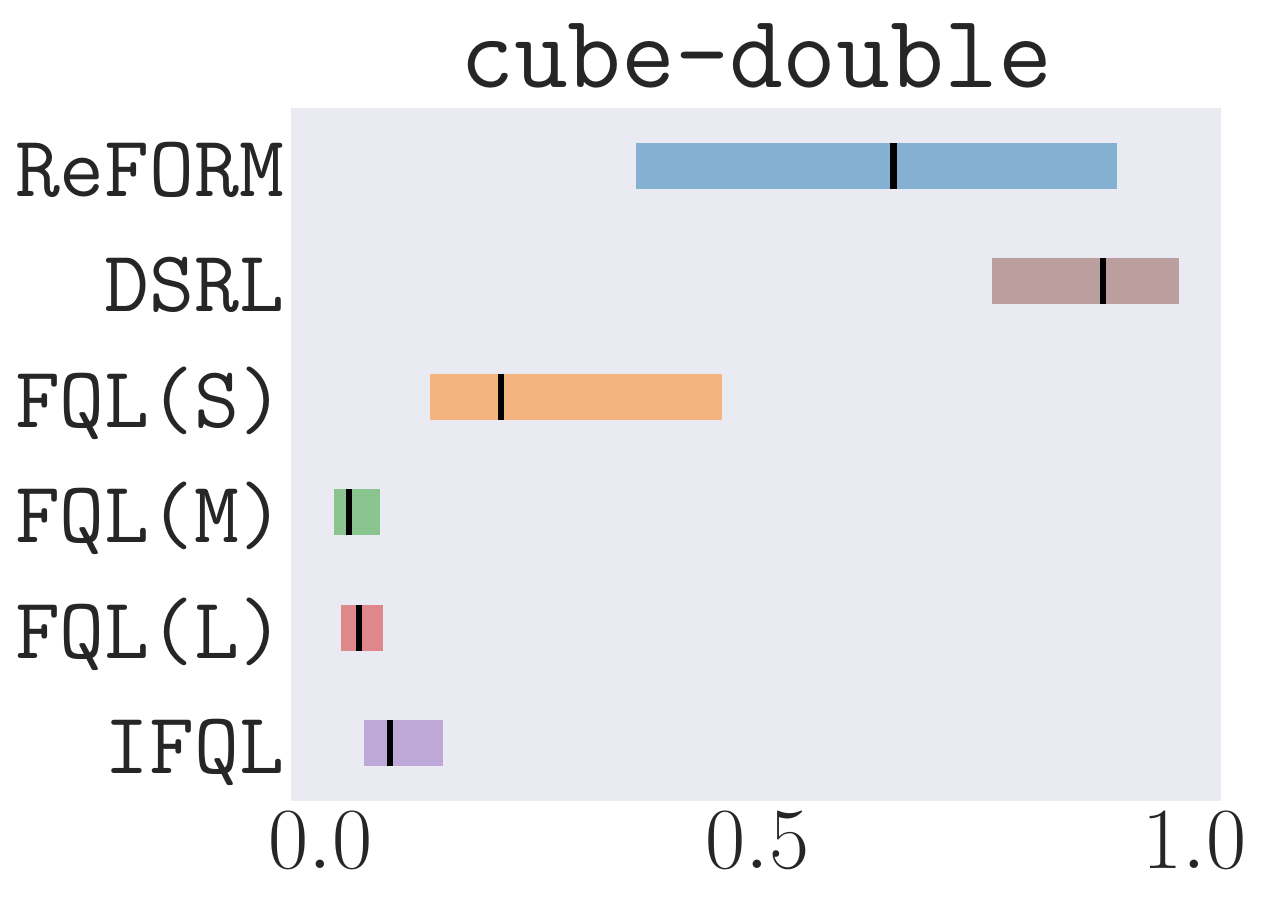}
    \includegraphics[width=0.24\linewidth]{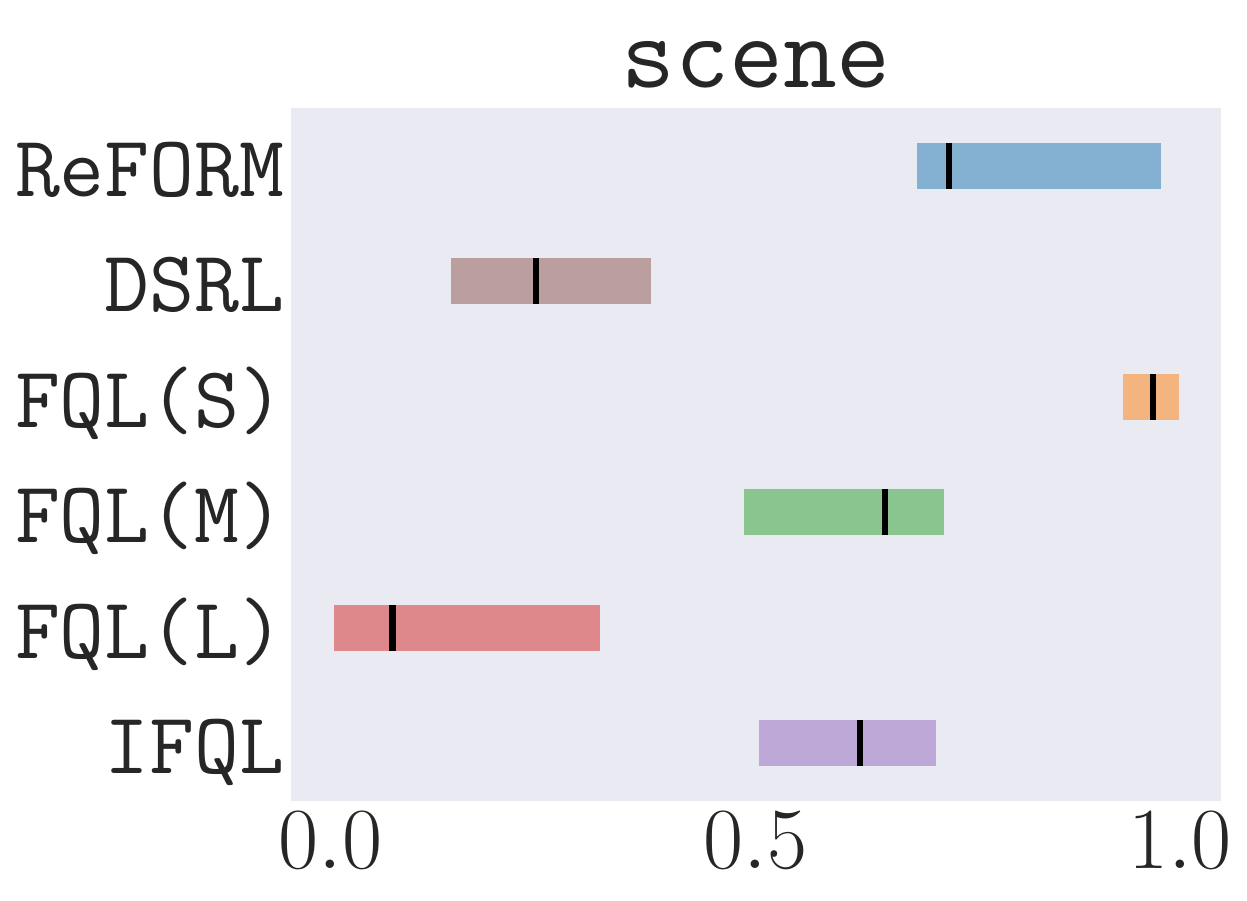}
    \caption{{Normalized scores with the \textsc{noisy} dataset.}}
    \label{fig: iqm-noisy}
\end{figure}

\subsection{Additional results}\label{app: additional exp results}

\paragraph{Normalized scores for each environment and each dataset.}
We present bar plots of the interquantile means (IQM) (see \citet{agarwal2021deep} for details) of the normalized scores for each algorithm in each environment with the \datasetname{clean} dataset (\Cref{fig: iqm-clean}) and the \datasetname{noisy} dataset (\Cref{fig: iqm-noisy}). We can observe that \ourname{} consistently achieves the best or comparable results in all environments with both datasets, with a constant set of hyperparameters. \baselinename{DSRL} and \baselinename{FQL(M)} generally perform the second and third best in environments with the \datasetname{clean} dataset. However, their performance drops when the \datasetname{noisy} dataset is used.

\begin{table}[t]
    \centering
    \caption{\textbf{Full results.} We present full results (normalized score) on $40$ OGBench tasks. The results are averaged over $3$ seeds and $32$ runs per seed. The results are bolded if the algorithm achieves at or above $95\%$ of the best performance following \citet{ogbench_park2025}. To save space, the \envname{-singletask} tags are omitted from task names.}
    \scriptsize
    \label{tab: full result}
	\begin{tabular}{l|l|ccccc|c}
        \toprule
        Task & Dataset & \baselinename{IFQL} & \baselinename{FQL(L)} & \baselinename{FQL(M)} & \baselinename{FQL(S)} & \baselinename{DSRL} & \ourname{} \\
        \midrule
        \envname{antmaze-large-navigate-task1-v0} & \datasetname{clean} & $0${\tiny $\pm0$} & $51${\tiny $\pm2$} & $\mathbf{96}${\tiny $\pm3$} & $1${\tiny $\pm1$} & $85${\tiny $\pm9$} & $65${\tiny $\pm9$} \\
        \envname{antmaze-large-navigate-task2-v0} & \datasetname{clean} & $0${\tiny $\pm0$} & $56${\tiny $\pm4$} & $62${\tiny $\pm44$} & $0${\tiny $\pm0$} & $\mathbf{83}${\tiny $\pm9$} & $72${\tiny $\pm5$} \\
        \envname{antmaze-large-navigate-task3-v0} & \datasetname{clean} & $0${\tiny $\pm0$} & $\mathbf{91}${\tiny $\pm6$} & $\mathbf{90}${\tiny $\pm7$} & $8${\tiny $\pm11$} & $77${\tiny $\pm9$} & $61${\tiny $\pm2$} \\
        \envname{antmaze-large-navigate-task4-v0} & \datasetname{clean} & $0${\tiny $\pm0$} & $74${\tiny $\pm5$} & $67${\tiny $\pm45$} & $0${\tiny $\pm0$} & $\mathbf{80}${\tiny $\pm6$} & $68${\tiny $\pm12$} \\
        \envname{antmaze-large-navigate-task5-v0} & \datasetname{clean} & $2${\tiny $\pm2$} & $61${\tiny $\pm4$} & $76${\tiny $\pm26$} & $6${\tiny $\pm4$} & $\mathbf{86}${\tiny $\pm9$} & $\mathbf{90}${\tiny $\pm3$} \\
        \midrule
        \envname{antmaze-large-explore-task1-v0} & \datasetname{noisy} & $40${\tiny $\pm30$} & $0${\tiny $\pm0$} & $0${\tiny $\pm0$} & $84${\tiny $\pm7$} & $0${\tiny $\pm0$} & $\mathbf{91}${\tiny $\pm6$} \\
        \envname{antmaze-large-explore-task2-v0} & \datasetname{noisy} & $0${\tiny $\pm0$} & $0${\tiny $\pm0$} & $0${\tiny $\pm0$} & $41${\tiny $\pm13$} & $0${\tiny $\pm0$} & $\mathbf{91}${\tiny $\pm6$} \\
        \envname{antmaze-large-explore-task3-v0} & \datasetname{noisy} & $69${\tiny $\pm28$} & $0${\tiny $\pm0$} & $0${\tiny $\pm0$} & $\mathbf{92}${\tiny $\pm8$} & $0${\tiny $\pm0$} & $\mathbf{87}${\tiny $\pm2$} \\
        \envname{antmaze-large-explore-task4-v0} & \datasetname{noisy} & $36${\tiny $\pm15$} & $0${\tiny $\pm0$} & $0${\tiny $\pm0$} & $\mathbf{56}${\tiny $\pm37$} & $0${\tiny $\pm0$} & $5${\tiny $\pm8$} \\
        \envname{antmaze-large-explore-task5-v0} & \datasetname{noisy} & $0${\tiny $\pm0$} & $0${\tiny $\pm0$} & $0${\tiny $\pm0$} & $16${\tiny $\pm22$} & $0${\tiny $\pm0$} & $\mathbf{88}${\tiny $\pm14$} \\
        \midrule
        \envname{cube-single-play-task1-v0} & \datasetname{clean} & $40${\tiny $\pm13$} & $47${\tiny $\pm5$} & $86${\tiny $\pm2$} & $57${\tiny $\pm12$} & $60${\tiny $\pm43$} & $\mathbf{97}${\tiny $\pm3$} \\
        \envname{cube-single-play-task2-v0} & \datasetname{clean} & $7${\tiny $\pm3$} & $20${\tiny $\pm17$} & $73${\tiny $\pm10$} & $57${\tiny $\pm5$} & $\mathbf{86}${\tiny $\pm3$} & $\mathbf{85}${\tiny $\pm11$} \\
        \envname{cube-single-play-task3-v0} & \datasetname{clean} & $14${\tiny $\pm5$} & $18${\tiny $\pm1$} & $77${\tiny $\pm6$} & $44${\tiny $\pm35$} & $68${\tiny $\pm21$} & $\mathbf{99}${\tiny $\pm1$} \\
        \envname{cube-single-play-task4-v0} & \datasetname{clean} & $30${\tiny $\pm6$} & $19${\tiny $\pm17$} & $73${\tiny $\pm9$} & $77${\tiny $\pm3$} & $59${\tiny $\pm19$} & $\mathbf{89}${\tiny $\pm11$} \\
        \envname{cube-single-play-task5-v0} & \datasetname{clean} & $43${\tiny $\pm12$} & $25${\tiny $\pm19$} & $85${\tiny $\pm14$} & $73${\tiny $\pm3$} & $61${\tiny $\pm28$} & $\mathbf{97}${\tiny $\pm4$} \\
        \midrule
        \envname{cube-single-noisy-task1-v0} & \datasetname{noisy} & $46${\tiny $\pm10$} & $12${\tiny $\pm2$} & $68${\tiny $\pm8$} & $\mathbf{95}${\tiny $\pm1$} & $31${\tiny $\pm22$} & $\mathbf{99}${\tiny $\pm1$} \\
        \envname{cube-single-noisy-task2-v0} & \datasetname{noisy} & $53${\tiny $\pm15$} & $2${\tiny $\pm2$} & $71${\tiny $\pm3$} & $\mathbf{97}${\tiny $\pm1$} & $45${\tiny $\pm10$} & $\mathbf{100}${\tiny $\pm0$} \\
        \envname{cube-single-noisy-task3-v0} & \datasetname{noisy} & $68${\tiny $\pm6$} & $5${\tiny $\pm5$} & $54${\tiny $\pm3$} & $\mathbf{98}${\tiny $\pm1$} & $3${\tiny $\pm1$} & $\mathbf{98}${\tiny $\pm2$} \\
        \envname{cube-single-noisy-task4-v0} & \datasetname{noisy} & $40${\tiny $\pm4$} & $2${\tiny $\pm1$} & $63${\tiny $\pm5$} & $94${\tiny $\pm1$} & $31${\tiny $\pm5$} & $\mathbf{100}${\tiny $\pm1$} \\
        \envname{cube-single-noisy-task5-v0} & \datasetname{noisy} & $37${\tiny $\pm4$} & $3${\tiny $\pm2$} & $72${\tiny $\pm7$} & $\mathbf{96}${\tiny $\pm1$} & $61${\tiny $\pm3$} & $\mathbf{99}${\tiny $\pm1$} \\
        \midrule
        \envname{cube-double-play-task1-v0} & \datasetname{clean} & $42${\tiny $\pm6$} & $7${\tiny $\pm5$} & $37${\tiny $\pm7$} & $32${\tiny $\pm2$} & $68${\tiny $\pm26$} & $\mathbf{74}${\tiny $\pm6$} \\
        \envname{cube-double-play-task2-v0} & \datasetname{clean} & $22${\tiny $\pm10$} & $4${\tiny $\pm1$} & $30${\tiny $\pm3$} & $2${\tiny $\pm3$} & $47${\tiny $\pm33$} & $\mathbf{90}${\tiny $\pm12$} \\
        \envname{cube-double-play-task3-v0} & \datasetname{clean} & $17${\tiny $\pm2$} & $1${\tiny $\pm1$} & $17${\tiny $\pm6$} & $4${\tiny $\pm4$} & $42${\tiny $\pm30$} & $\mathbf{90}${\tiny $\pm7$} \\
        \envname{cube-double-play-task4-v0} & \datasetname{clean} & $30${\tiny $\pm15$} & $11${\tiny $\pm11$} & $25${\tiny $\pm6$} & $4${\tiny $\pm1$} & $30${\tiny $\pm23$} & $\mathbf{90}${\tiny $\pm7$} \\
        \envname{cube-double-play-task5-v0} & \datasetname{clean} & $12${\tiny $\pm5$} & $2${\tiny $\pm1$} & $24${\tiny $\pm10$} & $26${\tiny $\pm7$} & $17${\tiny $\pm23$} & $\mathbf{82}${\tiny $\pm21$} \\
        \midrule
        \envname{cube-double-noisy-task1-v0} & \datasetname{noisy} & $62${\tiny $\pm5$} & $6${\tiny $\pm4$} & $12${\tiny $\pm14$} & $68${\tiny $\pm14$} & $86${\tiny $\pm9$} & $\mathbf{94}${\tiny $\pm6$} \\
        \envname{cube-double-noisy-task2-v0} & \datasetname{noisy} & $5${\tiny $\pm3$} & $2${\tiny $\pm1$} & $2${\tiny $\pm1$} & $37${\tiny $\pm18$} & $\mathbf{76}${\tiny $\pm25$} & $56${\tiny $\pm20$} \\
        \envname{cube-double-noisy-task3-v0} & \datasetname{noisy} & $5${\tiny $\pm3$} & $3${\tiny $\pm2$} & $7${\tiny $\pm6$} & $15${\tiny $\pm4$} & $\mathbf{75}${\tiny $\pm30$} & $52${\tiny $\pm22$} \\
        \envname{cube-double-noisy-task4-v0} & \datasetname{noisy} & $10${\tiny $\pm3$} & $6${\tiny $\pm2$} & $7${\tiny $\pm3$} & $20${\tiny $\pm12$} & $\mathbf{72}${\tiny $\pm31$} & $33${\tiny $\pm47$} \\
        \envname{cube-double-noisy-task5-v0} & \datasetname{noisy} & $8${\tiny $\pm6$} & $6${\tiny $\pm4$} & $2${\tiny $\pm0$} & $9${\tiny $\pm6$} & $\mathbf{90}${\tiny $\pm10$} & $67${\tiny $\pm23$} \\
        \midrule
        \envname{scene-play-task1-v0} & \datasetname{clean} & $\mathbf{99}${\tiny $\pm1$} & $69${\tiny $\pm6$} & $\mathbf{98}${\tiny $\pm2$} & $23${\tiny $\pm17$} & $\mathbf{94}${\tiny $\pm2$} & $\mathbf{95}${\tiny $\pm2$} \\
        \envname{scene-play-task2-v0} & \datasetname{clean} & $41${\tiny $\pm7$} & $8${\tiny $\pm6$} & $73${\tiny $\pm3$} & $27${\tiny $\pm12$} & $87${\tiny $\pm1$} & $\mathbf{99}${\tiny $\pm1$} \\
        \envname{scene-play-task3-v0} & \datasetname{clean} & $56${\tiny $\pm2$} & $52${\tiny $\pm4$} & $89${\tiny $\pm1$} & $20${\tiny $\pm15$} & $85${\tiny $\pm2$} & $\mathbf{99}${\tiny $\pm1$} \\
        \envname{scene-play-task4-v0} & \datasetname{clean} & $24${\tiny $\pm1$} & $15${\tiny $\pm11$} & $65${\tiny $\pm26$} & $46${\tiny $\pm2$} & $\mathbf{97}${\tiny $\pm2$} & $25${\tiny $\pm10$} \\
        \envname{scene-play-task5-v0} & \datasetname{clean} & $80${\tiny $\pm3$} & $65${\tiny $\pm5$} & $76${\tiny $\pm6$} & $22${\tiny $\pm23$} & $92${\tiny $\pm3$} & $\mathbf{99}${\tiny $\pm1$} \\
        \midrule
        \envname{scene-noisy-task1-v0} & \datasetname{noisy} & $87${\tiny $\pm9$} & $27${\tiny $\pm11$} & $87${\tiny $\pm1$} & $\mathbf{100}${\tiny $\pm0$} & $39${\tiny $\pm43$} & $\mathbf{99}${\tiny $\pm1$} \\
        \envname{scene-noisy-task2-v0} & \datasetname{noisy} & $40${\tiny $\pm7$} & $1${\tiny $\pm1$} & $23${\tiny $\pm5$} & $\mathbf{95}${\tiny $\pm4$} & $15${\tiny $\pm4$} & $61${\tiny $\pm14$} \\
        \envname{scene-noisy-task3-v0} & \datasetname{noisy} & $66${\tiny $\pm3$} & $3${\tiny $\pm3$} & $66${\tiny $\pm4$} & $\mathbf{96}${\tiny $\pm3$} & $45${\tiny $\pm15$} & $69${\tiny $\pm3$} \\
        \envname{scene-noisy-task4-v0} & \datasetname{noisy} & $57${\tiny $\pm4$} & $5${\tiny $\pm7$} & $53${\tiny $\pm6$} & $\mathbf{88}${\tiny $\pm8$} & $25${\tiny $\pm10$} & $71${\tiny $\pm0$} \\
        \envname{scene-noisy-task5-v0} & \datasetname{noisy} & $57${\tiny $\pm21$} & $59${\tiny $\pm1$} & $70${\tiny $\pm2$} & $\mathbf{96}${\tiny $\pm1$} & $24${\tiny $\pm17$} & $\mathbf{98}${\tiny $\pm2$} \\
        \bottomrule
    \end{tabular}
\end{table}

\paragraph{Full results.} We present the full per-task results of all $40$ tasks in \Cref{tab: full result}. The results are averaged over $3$ seeds and $32$ runs per seed. The results are bolded if the algorithm achieves at or above $95\%$ of the best performance following \citet{ogbench_park2025}.

\begin{figure}[t!]
    \centering
    \includegraphics[width=\linewidth]{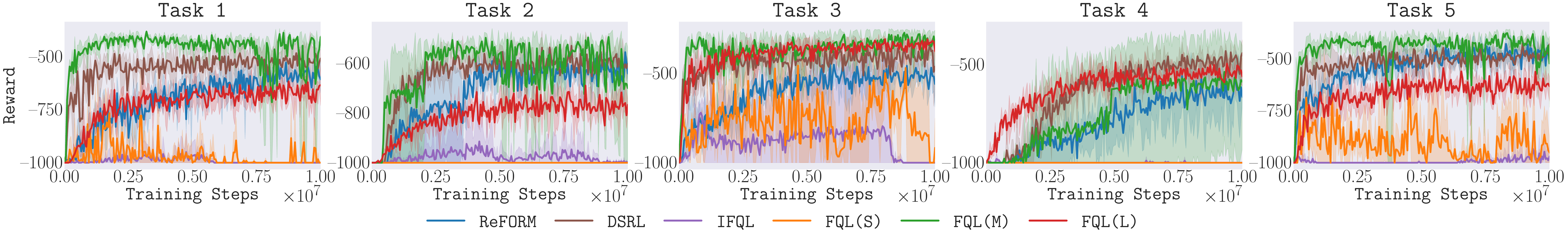}
    \caption{Training curves for \envname{antmaze-large} environment with the \datasetname{clean} dataset.}
    \label{fig: antmaze-large-navigate-training}
\end{figure}

\begin{figure}[t!]
    \centering
    \includegraphics[width=\linewidth]{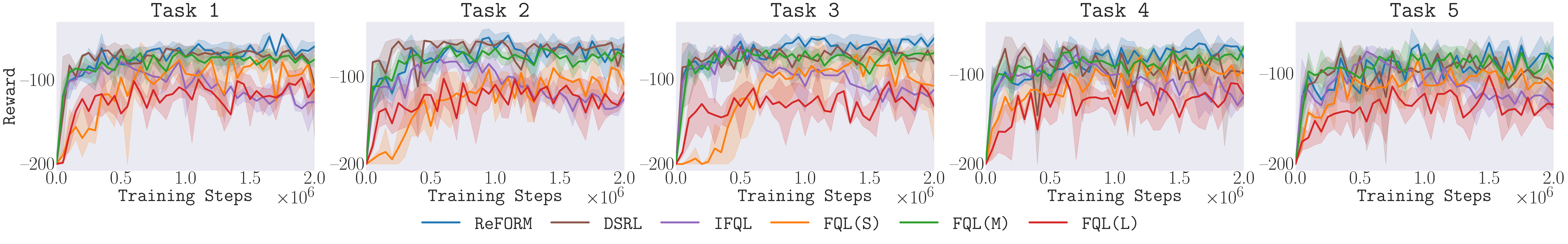}
    \caption{Training curves for \envname{cube-single} environment with the \datasetname{clean} dataset.}
    \label{fig: cube-single-play-training}
\end{figure}

\begin{figure}[t!]
    \centering
    \includegraphics[width=\linewidth]{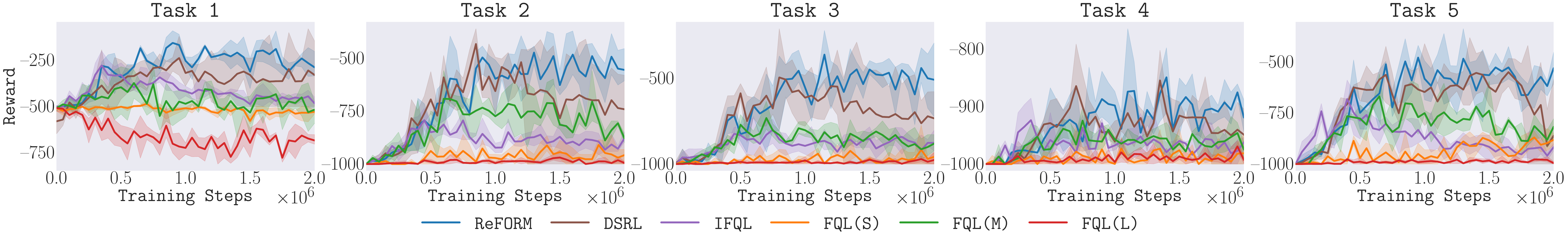}
    \caption{Training curves for \envname{cube-double} environment with the \datasetname{clean} dataset.}
    \label{fig: cube-double-play-training}
\end{figure}

\begin{figure}[t!]
    \centering
    \includegraphics[width=\linewidth]{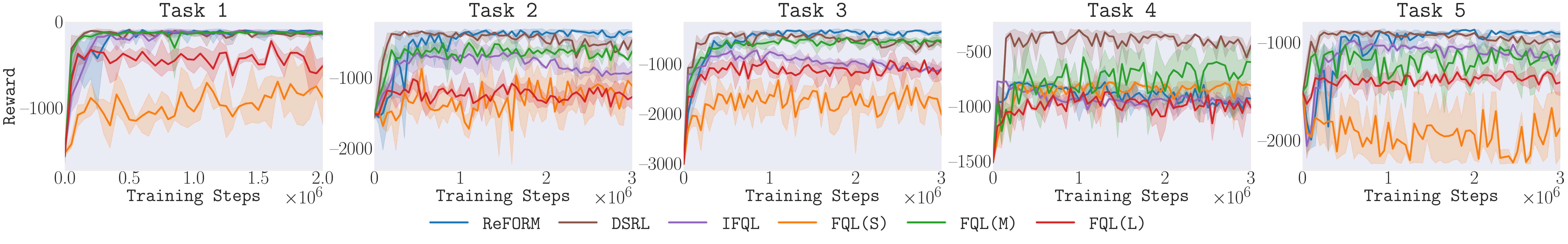}
    \caption{Training curves for \envname{scene} environment with the \datasetname{clean} dataset.}
    \label{fig: scene-play-training}
\end{figure}

\begin{figure}[t!]
    \centering
    \includegraphics[width=\linewidth]{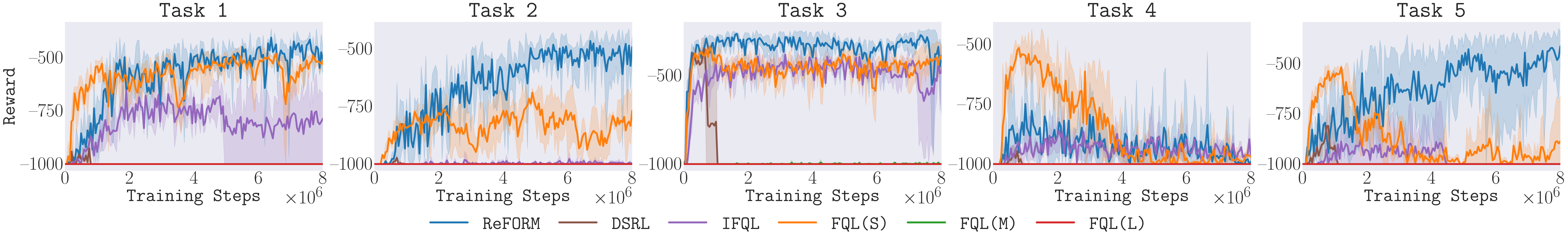}
    \caption{Training curves for \envname{antmaze-large} environment with the \datasetname{noisy} dataset.}
    \label{fig: antmaze-large-explore-training}
\end{figure}

\begin{figure}[t!]
    \centering
    \includegraphics[width=\linewidth]{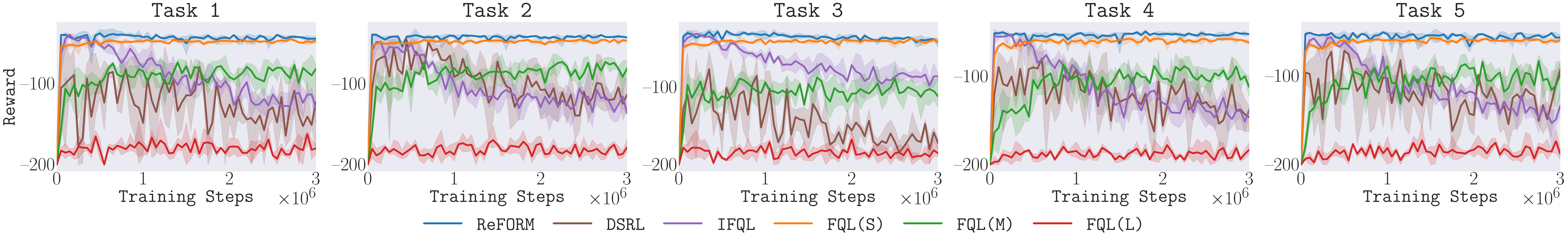}
    \caption{Training curves for \envname{cube-single} environment with the \datasetname{noisy} dataset.}
    \label{fig: cube-single-noisy-training}
\end{figure}

\begin{figure}[t!]
    \centering
    \includegraphics[width=\linewidth]{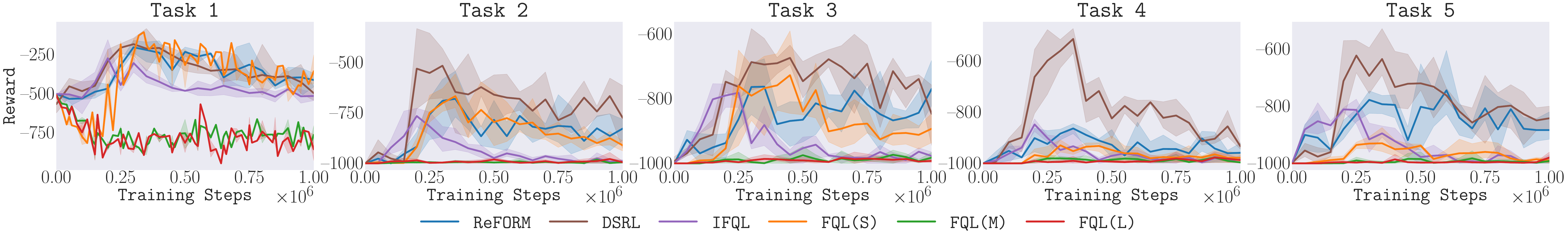}
    \caption{Training curves for \envname{cube-double} environment with the \datasetname{noisy} dataset.}
    \label{fig: cube-double-noisy-training}
\end{figure}

\begin{figure}[t!]
    \centering
    \includegraphics[width=\linewidth]{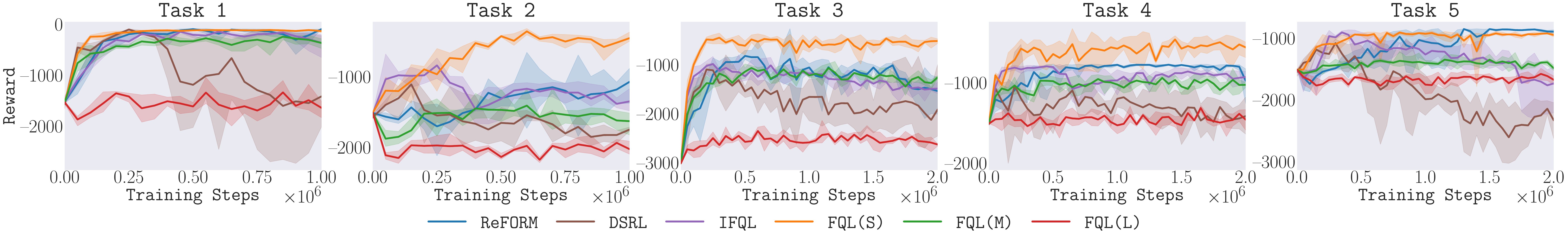}
    \caption{Training curves for \envname{scene} environment with the \datasetname{noisy} dataset.}
    \label{fig: scene-noisy-training}
\end{figure}

\paragraph{Training curves.} We present the training curves for all tasks in all environments with both the \datasetname{clean} dataset (\Cref{fig: antmaze-large-navigate-training}-\Cref{fig: scene-play-training}) and the \datasetname{noisy} dataset (\Cref{fig: antmaze-large-explore-training}-\Cref{fig: scene-noisy-training}). In addition, we present training curves corresponding to \Cref{fig: ablations} (left) in the main pages in \Cref{fig: ablation-training}.

\paragraph{D4RL results.} We further conduct experiments in D4RL \citep{fu2020d4rl} antmaze and adoit environments to test the performance of \ourname{} across different benchmarks. Although the benchmark is different, we \textit{maintain} the hyperparameters of \ourname{} as those in OGBench. The results (D4RL normalized return) are shown in \Cref{tab: d4rl}, and \ourname{} still consistently achieves the best or comparable results. \baselinename{DSRL} is omitted in these results because its paper \citep{wagenmaker2025steering} does not report the best hyperparameters of \baselinename{DSRL} in these environments. 

\begin{table}
	\begin{center}
    \caption{\textbf{D4RL results.} We present the following results on environments in the D4RL \cite{fu2020d4rl} benchmark. The results are averaged over $3$ seeds and $32$ runs per seed. The results are bolded if the algorithm achieves at or above $95\%$ of the best performance following \citet{ogbench_park2025}.}
    \label{tab: d4rl}
	\begin{tabular}{l|cccc|c}
			\toprule
            Environment & \baselinename{IFQL} & \baselinename{FQL(L)} & \baselinename{FQL(M)} & \baselinename{FQL(S)} & \ourname{} \\
			\midrule
            \envname{antmaze-umaze-v2} & $91${\tiny $\pm7$} & $85${\tiny $\pm4$} & $\mathbf{99}${\tiny $\pm1$} & $88${\tiny $\pm13$} & $\mathbf{97}${\tiny $\pm0$} \\
			\envname{antmaze-umaze-diverse-v2} & $55${\tiny $\pm28$} & $57${\tiny $\pm10$} & $\mathbf{88}${\tiny $\pm5$} & $61${\tiny $\pm26$} & $\mathbf{83}${\tiny $\pm3$} \\
			\envname{antmaze-medium-play-v2} & $3${\tiny $\pm4$} & $14${\tiny $\pm6$} & $\mathbf{92}${\tiny $\pm1$} & $52${\tiny $\pm15$} & $85${\tiny $\pm4$} \\
			\envname{antmaze-medium-diverse-v2} & $24${\tiny $\pm34$} & $9${\tiny $\pm4$} & $\mathbf{81}${\tiny $\pm13$} & $24${\tiny $\pm30$} & $\mathbf{80}${\tiny $\pm4$} \\
			\envname{antmaze-large-play-v2} & $17${\tiny $\pm21$} & $43${\tiny $\pm10$} & $61${\tiny $\pm21$} & $3${\tiny $\pm4$} & $\mathbf{71}${\tiny $\pm4$} \\
			\envname{antmaze-large-diverse-v2} & $28${\tiny $\pm27$} & $55${\tiny $\pm4$} & $\mathbf{85}${\tiny $\pm8$} & $8${\tiny $\pm12$} & $69${\tiny $\pm9$} \\
            \midrule
			\envname{pen-human-v1} & $\mathbf{65}${\tiny $\pm1$} & $48${\tiny $\pm0$} & $59${\tiny $\pm4$} & $31${\tiny $\pm4$} & $\mathbf{64}${\tiny $\pm7$} \\
			\envname{pen-cloned-v1} & $\mathbf{81}${\tiny $\pm8$} & $61${\tiny $\pm7$} & $66${\tiny $\pm5$} & $57${\tiny $\pm6$} & $70${\tiny $\pm6$} \\
			\envname{pen-expert-v1} & $120${\tiny $\pm3$} & $105${\tiny $\pm7$} & $\mathbf{128}${\tiny $\pm1$} & $107${\tiny $\pm10$} & $\mathbf{129}${\tiny $\pm7$} \\
            \envname{door-human-v1} & $3${\tiny $\pm1$} & $2${\tiny $\pm1$} & $0${\tiny $\pm0$} & $0${\tiny $\pm0$} & $\mathbf{4}${\tiny $\pm1$} \\
			\envname{door-cloned-v1} & $-0${\tiny $\pm0$} & $0${\tiny $\pm0$} & $\mathbf{3}${\tiny $\pm2$} & $0${\tiny $\pm0$} & $1${\tiny $\pm1$} \\
			\envname{door-expert-v1} & $89${\tiny $\pm5$} & $\mathbf{104}${\tiny $\pm1$} & $\mathbf{105}${\tiny $\pm0$} & $\mathbf{102}${\tiny $\pm0$} & $\mathbf{104}${\tiny $\pm4$} \\
			\bottomrule
		\end{tabular}
	\end{center}
\end{table}

\paragraph{Visual manipulation results.} We also conduct experiments in OGBench \citep{ogbench_park2025} visual manipulation environments to test the performance of \ourname{} with higher-dimensional image-based inputs. Similarly, we \textit{maintain} the hyperparameters of \ourname{}. The results (return) are shown in \Cref{tab: visual}, and \ourname{} performs the best. \baselinename{DSRL} is omitted because its best hyperparameters in these environments are not reported in \citet{wagenmaker2025steering}.

\begin{table}
	\begin{center}
    \caption{\textbf{Visual manipulation results.} We present the following results on visual manipulation environments in OGBench \cite{ogbench_park2025}. The results are averaged over $3$ seeds and $32$ runs per seed. The results are bolded if the algorithm achieves at or above $95\%$ of the best performance following \citet{ogbench_park2025}. To save space, the \envname{-singletask} tags are omitted from task names.}
    \label{tab: visual}
    \scriptsize
	\begin{tabular}{l|c|cccc|c}
			\toprule
            Task & Dataset & \baselinename{IFQL} & \baselinename{FQL(L)} & \baselinename{FQL(M)} & \baselinename{FQL(S)} & \ourname{} \\
			\midrule
            \envname{visual-cube-single-play-task1-v0} & \datasetname{clean} & $-117${\tiny $\pm7$} & $-150${\tiny $\pm16$} & $\mathbf{-110}${\tiny $\pm9$} & $-138${\tiny $\pm19$} & $\mathbf{-108}${\tiny $\pm12$} \\
			\envname{visual-cube-single-noisy-task1-v0} & \datasetname{noisy} & $-95${\tiny $\pm2$} & $-176${\tiny $\pm10$} & $-103${\tiny $\pm2$} & $-57${\tiny $\pm3$} & $\mathbf{-52}${\tiny $\pm7$} \\
			\bottomrule
		\end{tabular}
	\end{center}
\end{table}

\begin{figure}[t!]
    \centering
    \begin{subfigure}{.49\linewidth}
        \centering
        \includegraphics[width=\linewidth]{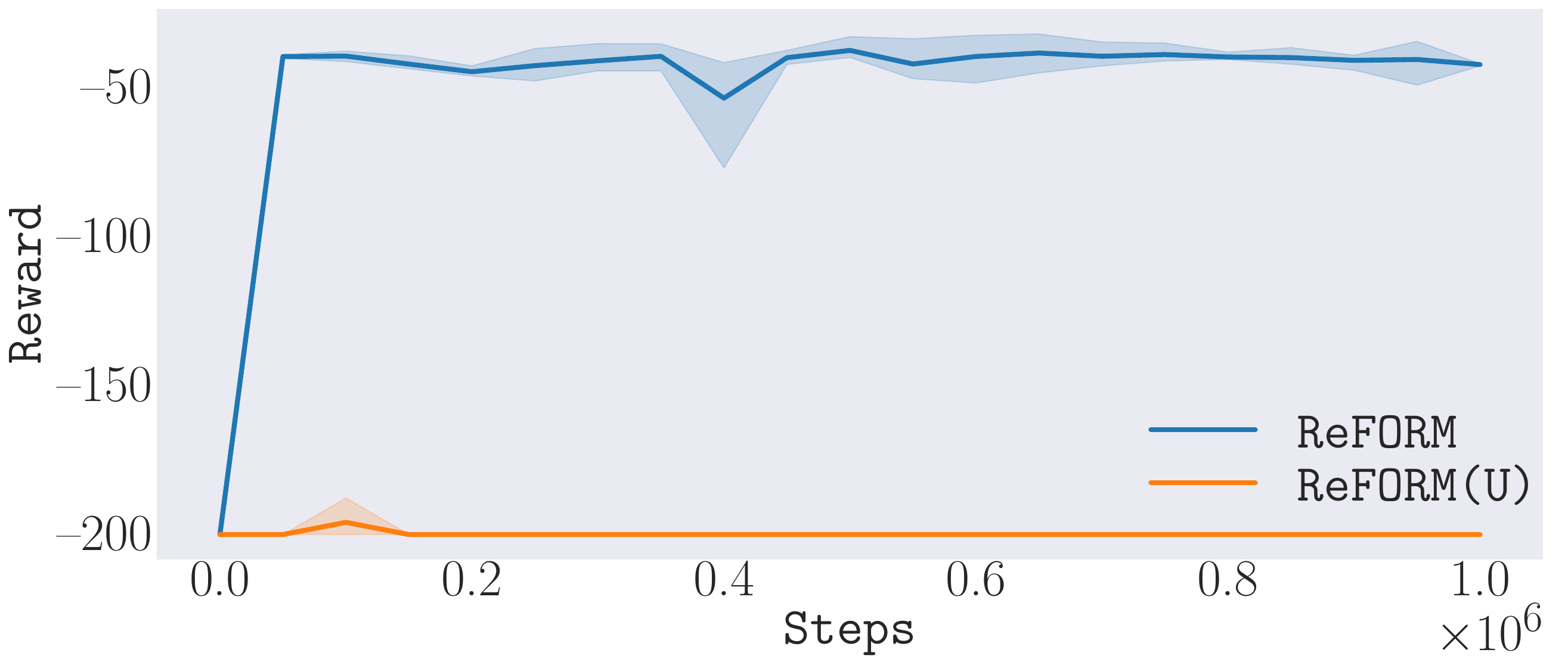}
        \caption{Comparing \algo{} and \baselinename{ReFORM(U)}.}
    \end{subfigure}
\begin{subfigure}{.49\linewidth}
        \centering
        \includegraphics[width=\linewidth]{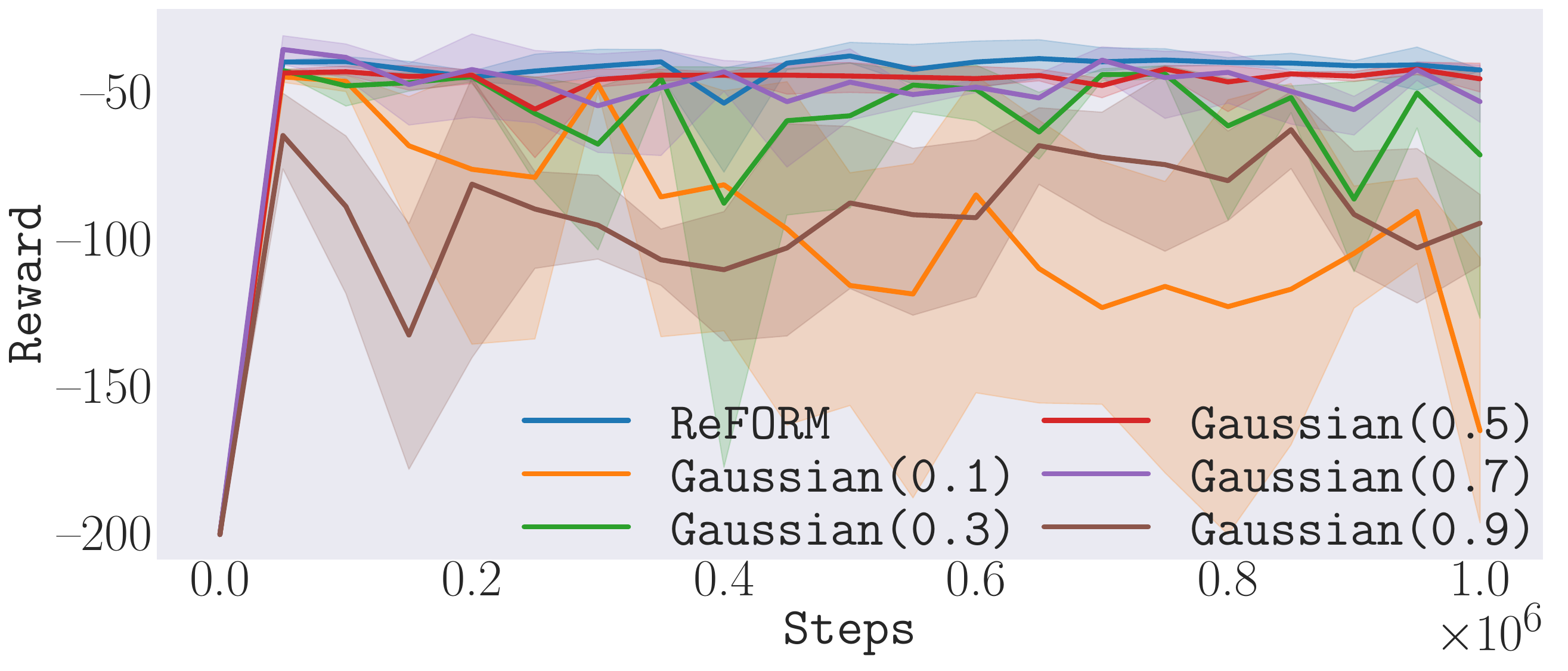}
        \caption{Comparing \algo{} and \baselinename{Gaussian($\xi$)}.}
    \end{subfigure}
    \caption{\textbf{More training curves for ablation studies.} \baselinename{ReFORM(U)} changes both $q_\mathrm{NG}$ and $q_\mathrm{BC}$ from $\unif(\gB_l^d)$ to the standard Gaussian distribution and removes the reflection term. \baselinename{Gaussian($\xi$)} keeps $q_\mathrm{NG}=\unif(\gB_l^d)$ but changes $q_\mathrm{BC}$ to the standard Gaussian distribution. Then, the radius of the hypersphere $\gB_l^d$ is chosen such that the standard Gaussian distribution has probability mass $\xi$ in $\gB_l^d$.}
    \label{fig: ablation-training}
\end{figure}

\paragraph{Visualization of the generated noise in the toy example.}
For the toy example presented in \Cref{sec: ablations}, it is possible to visualize the generated noises directly for \ourname{} and \baselinename{DSRL}. We visualize the noises in \Cref{fig: viz-noise-toy}. We observe that the generated noise with reflected flow (\ourname{}) is more concentrated while retaining two modes, while with a Gaussian distribution squashed by $\mathrm{tanh}$ (\baselinename{DSRL}), the generated noise is unimodal and spreads out a lot. 

\begin{figure}
    \centering
    \begin{subfigure}{.3\linewidth}
        \centering
        \includegraphics[width=\linewidth]{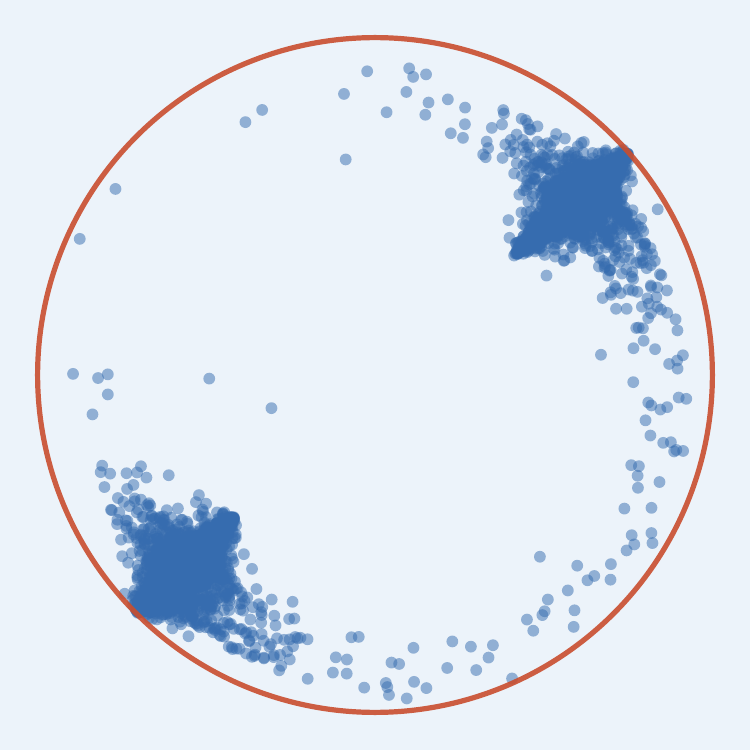}
        \caption{Generated noise of \ourname{}.}
    \end{subfigure}
        \begin{subfigure}{.3\linewidth}
        \centering
        \includegraphics[width=\linewidth]{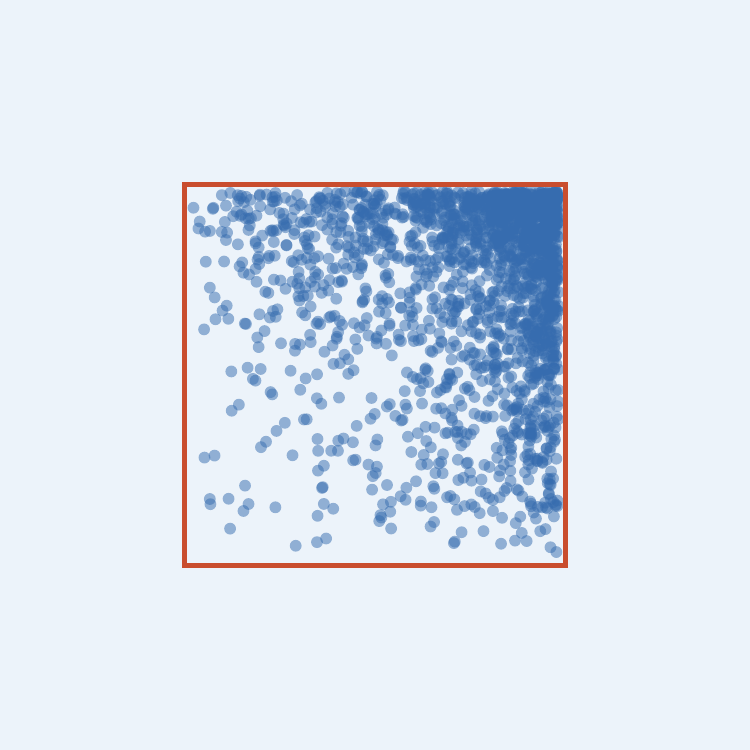}
        \caption{Generated noise of \baselinename{DSRL}.}
    \end{subfigure}
    \caption{Visualization of the generated noises in the 2-dimensional toy example. }
    \label{fig: viz-noise-toy}
\end{figure}

\paragraph{Ablations on the radius of the hypersphere $\gB_l^d$.} 
One hyperparameter introduced in \ourname{} is the radius $l$ of the hypersphere $\gB_l^d$. As discussed in \Cref{app: implementation}, we select the smallest $l$ such that $\gA\subseteq \gB_l^d$ in our implementation. In OGBench environments, the action space is $[-1, 1]^d$, so we choose $l=\sqrt{d}$. To study the sensitivity of \ourname{} w.r.t. $l$, we conduct experiments in the \envname{cube-single} environment with the \datasetname{noisy} dataset, and vary $l$ in $\{0.25\sqrt{d}, 0.5\sqrt{d}, \sqrt{d}, 2\sqrt{d}, 4\sqrt{d}\}$. The results are shown in \Cref{fig: ablation r_max}, which shows no significant difference among these choices. Therefore, \ourname{} is robust w.r.t. the choice of $l$, and empirically $l$ can be chosen as any number close to the scale of the action space. 

\paragraph{Training time.} We report the training time of \ourname{} and all baselines in \Cref{tab: training-time}. The table shows that \ourname{} indeed doubles the training time compared to FQL due to the 2-stage flow. However, as shown in our experiments, \baselinename{FQL} is sensitive to hyperparameters, and searching for optimal hyperparameters requires significantly more runs. On the contrary, \ourname{} can be used without any hyperparameter searching.

\begin{figure}[t]
    \centering
    \includegraphics[width=0.49\linewidth]{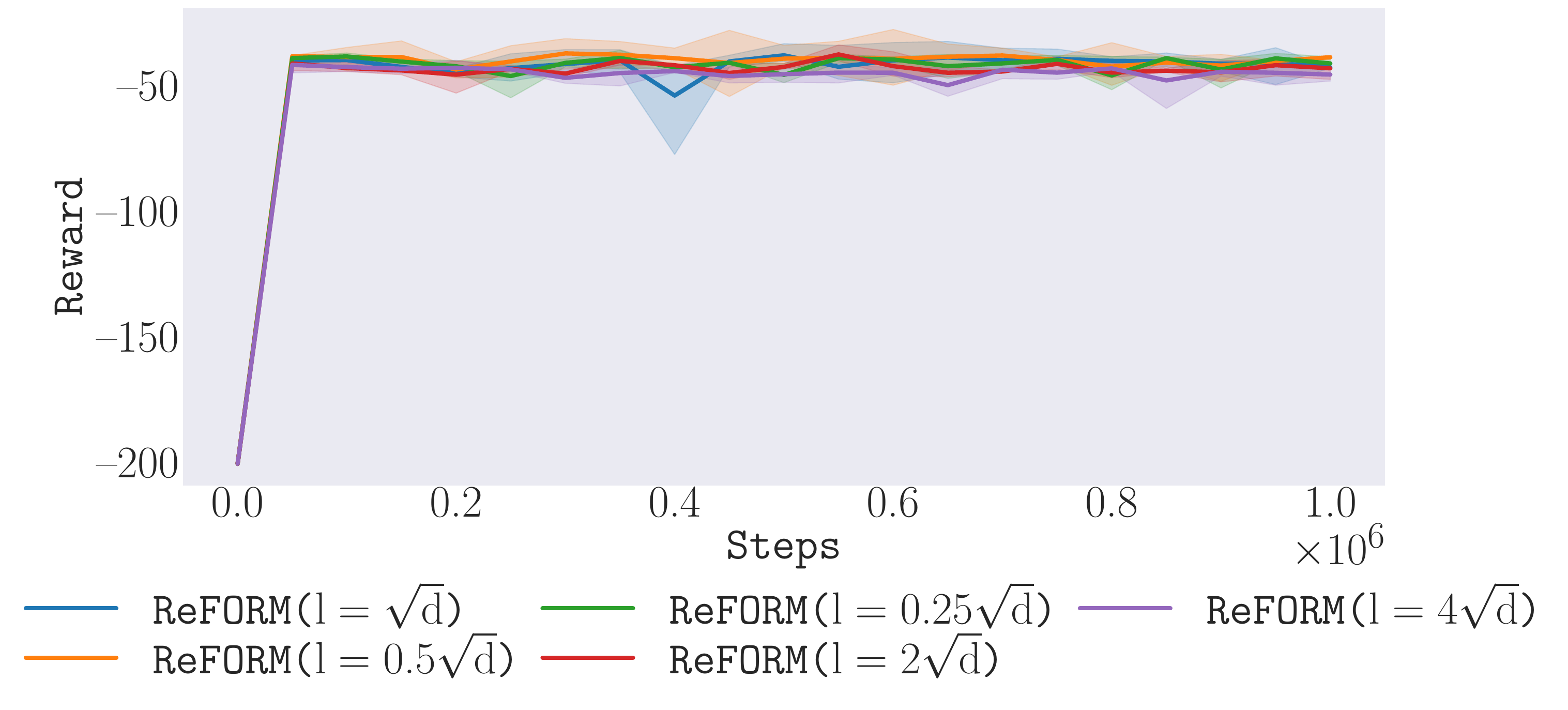}
    \caption{Sensitivity analysis of \algo{} w.r.t. $l$.}
    \label{fig: ablation r_max}
\end{figure}

\begin{table}[t]
    \centering
    \caption{Approximate training time of \algo{} and the baselines.}
    \label{tab: training-time}
    \begin{tabular}{l|cccc}
        \toprule
        Algorithm & \algo{} & \baselinename{FQL} & \baselinename{IFQL} & \baselinename{DSRL} \\
        \midrule
        Training time (minutes, $10^6$ steps) & $80$ & $40$ & $35$ & $55$\\
        \bottomrule
    \end{tabular}
\end{table}

\subsection{Code}

We provide code for \algo{} in our supplementary materials. 

\section{The Use of Large Language Models}

This paper uses Large Language Models to correct spelling and grammar issues. 

\end{document}